\newtheorem{theorem}{Theorem}
\newtheorem{corollary}[theorem]{Corollary}
\newtheorem{lemma}[theorem]{Lemma}
\newtheorem{prop}[theorem]{Proposition}
\title{Collapse of deep and narrow neural nets}
\author{Lu Lu \\
Division of Applied Mathematics \\
Brown University \\
Providence, RI 02912, USA \\
\texttt{lu\_lu\_1@brown.edu} \\
\And
Yanhui Su \\
College of Mathematics and Computer Science \\
Fuzhou University \\
Fuzhou, Fujian 350116, China \\
\texttt{suyh@fzu.edu.cn} \\
\And
George Em Karniadakis \\
Division of Applied Mathematics \\
Brown University \\
Providence, RI 02912, USA \\
\texttt{george\_karniadakis@brown.edu} \\
}
\begin{document}

\maketitle

\begin{abstract}
Recent theoretical work has demonstrated that deep neural networks have superior performance over shallow networks, but their training is more difficult, e.g., they suffer from the vanishing gradient problem. This problem can be typically resolved by the rectified linear unit (ReLU) activation. However, here we show that even for such activation, deep and narrow neural networks (NNs) will converge to erroneous mean or median states of the target function depending on the loss with high probability. Deep and narrow NNs are encountered in solving partial differential equations with high-order derivatives. We demonstrate this collapse of such NNs both numerically and theoretically, and provide estimates of the probability of collapse. We also construct a diagram of a safe region for designing NNs that avoid the collapse to erroneous states. Finally, we examine different ways of initialization and normalization that may avoid the collapse problem. Asymmetric initializations may reduce the probability of collapse but do not totally eliminate it.
\end{abstract}

\section{Introduction}

The best-known universal approximation theorems of neural networks (NNs) were obtained almost three decades ago by~\citet{cybenko1989approximation} and~\citet{hornik1989multilayer}, stating that every measurable function can be approximated accurately by a single-hidden-layer neural network, i.e., a shallow neural network. Although powerful, these results do not provide any information on the required size of a neural network to achieve a pre-specified accuracy. In~\citet{barron1993universal}, the author analyzed the size of a neural network to approximate functions using Fourier transforms. Subsequently, in~\citet{mhaskar1996neural}, the authors considered optimal approximations of smooth and analytic functions in shallow networks, and demonstrated that $\epsilon^{-d/n}$ neurons can uniformly approximate any $C^n$-function on a compact set in $\mathbb{R}^d$ with error $\epsilon$. This is an interesting result and it shows that to approximate a three-dimensional function with accuracy $10^{-6}$ we need to design a NN with $10^{18}$ neurons for a $C^1$ function, but for a very smooth function, e.g., $C^{6}$, we only need 1000 neurons. In the last 15 years, deep neural networks (i.e., networks with a large number of layers) have been used very effectively in diverse applications.
% such as image classification~\citep{krizhevsky2012imagenet}, speech recognition~\citep{hinton2012deep}, game intelligence~\citep{silver2016mastering}, regression~\citep{lagaris1997artificial,lagaris2000neural}, etc.

After some initial debate, at the present time, it seems that deep NNs perform better than shallow NNs of comparable size, e.g., a 3-layer NN with 10 neurons per layer may be a better approximator than a 1-layer NN with 30 neurons. From the approximation point of view, there are several theoretical results to explain this superior performance. In~\citet{eldan2016power}, the authors showed that a simple approximately radial function can be approximated by a small 3-layer feed-forward NN, but it cannot be approximated by any 2-layer network with the same accuracy irrespective of the activation function, unless its width is exponential in the dimension (see~\citet{mhaskar2017and,mhaskar2016deep,delalleau2011shallow,poggio2017and} for further discussions). In~\citet{liang2016deep} (see also~\citet{yarotsky2017error}), the authors claimed that for $\epsilon$-approximation of a large class of piecewise smooth functions using the rectified linear unit (ReLU) $\max(x,0)$ activation function, a multilayer NN using $\Theta(\log(1/\epsilon))$ layers only needs $\mathcal{O}(\mathrm{poly}\log(1/\epsilon))$ neurons, while $\Omega(\mathrm{poly}(1/\epsilon))$ neurons are required by NNs with $o(\log(1/\epsilon))$ layers. That is, the number of neurons required by a shallow network to approximate a function is exponentially larger than the corresponding number of neurons needed by a deep network for a given accuracy level of function approximation. In~\citet{petersen2017optimal}, the authors studied approximation theory of a class of (possibly discontinuous) piecewise $C^\beta$ functions for ReLU NN, and they found that no more than $\mathcal{O}(\epsilon^{-2(d-1)/\beta})$ nonzero weights are required to approximate the function in the $L^2$ sense, which proves to be optimal. Under this optimality condition, they also show that a minimum depth (up to a multiplicative constant) is given by $\beta/d$ to achieve optimal approximation rates. As for the expressive power of NNs in terms of the width, \citet{lu2017expressive} showed that any Lebesgue integrable function from $\mathbb{R}^{d}$ to $\mathbb{R}$ can be approximated by a ReLU forward NN of width $d+4$ with respect to $L^1$ distance, and cannot be approximated by any ReLU NN whose width is no more than $d$. \citet{hanin2017approximating} showed that any continuous function can be approximated by a ReLU forward NN of width $d_{in} + d_{out}$, and they also give a quantitative estimate of the depth of the NN; here $d_{in}$ and $d_{out}$ are the dimensions of the input and output, respectively. For classification problems, networks with a pyramidal structure and a certain class of activation functions need to have width larger than the input dimension in order to produce disconnected decision regions~\citep{nguyen2018neural}.

With regards to optimum activation function employed in the NN approximation, before 2010 the two commonly used non-linear activation functions were the logistic sigmoid $1/(1+e^{-x})$ and the hyperbolic tangent ($\tanh$); they are essentially the same function by simple re-scaling, i.e., $\tanh(x) = 2 \text{ sigmoid}(2x) - 1$. The deep neural networks with these two activations are difficult to train~\citep{glorot2010understanding}. The non-zero mean of the sigmoid induces important singular values in the Hessian~\citep{lecun1998efficient}, and they both suffer from the vanishing gradient problem, especially through neurons near saturation~\citep{glorot2010understanding}. In 2011, ReLU was proposed, which avoids the vanishing gradient problem because of its linearity, and also results in highly sparse NNs~\citep{glorot2011deep}. Since then, ReLU and its variants including leaky ReLU (LReLU)~\citep{maas2013rectifier}, parametric ReLU (PReLU)~\citep{he2015delving} and ELU~\citep{clevert2015fast} are favored in almost all deep learning models. Thus, in this study, we focus on the ReLU activation.

While the aforementioned theoretical results are very powerful, they do not necessarily coincide with the results of training of NNs in practice which is NP-hard~\citep{vsima2002training}. For example, while the theory may suggest that the approximation of a multi-dimensional smooth function is accurate for NN with 10 layers and 5 neurons per layer, it may not be possible to realize this NN approximation in practice. \citet{fukumizu2000local} first proved that existence of local minima poses a serious problem in learning of NNs. After that, more work has been done to understand bad local minima under different assumptions~\citep{zhou2017critical,du2017gradient,safran2017spurious,wu2018no,yun2018small}. Besides local minima, singularity~\citep{amari2006singularities} and bad saddle points~\citep{kawaguchi2016deep} also affect training of NNs. Our paper focuses on a particular kind of bad local minima, i.e., those encountered in deep and narrow neural networks collapse with high probability. This is the topic of our work presented in this paper. Our results are summarized in Fig.~\ref{fig:p_zeroinit_din1}, which shows a diagram of the safe region of training to achieve the theoretically expected accuracy. As we show in the next section through numerical simulations as well as in subsequent sections through theoretical results, there is very high probability that for deep and narrow ReLU NNs will converge to an erroneous state, which may be the mean value of the function or its partial mean value. However, if the NN is trained with proper normalization techniques, such as batch normalization~\citep{ioffe2015batch}, the collapse can be avoided. Not every normalization technique is effective, for example, weight normalization~\citep{salimans2016weight} leads to the collapse of the NN.

\section{Collapse of deep and narrow neural networks}\label{collapse_NN}

In this section, we will present several numerical tests for one- and two-dimensional functions of different regularity to demonstrate that deep and narrow NNs collapse to the mean value or partial mean value of the function.

It is well known that it is hard to train deep neural networks. Here we show through numerical simulations that the situation gets even worse if the neural networks is narrow. First, we use a 10-layer ReLU network with width 2 to approximate $y(x) = |x|$, and choose the mean squared error (MSE) as the loss. In fact, $y(x)$ can be represented exactly by a 2-layer ReLU NN with width 2, $|x| = \text{ReLU}(x)+\text{ReLU}(-x) = \begin{bmatrix} 1 & 1 \end{bmatrix} \text{ReLU}(\begin{bmatrix}1 \\ - 1\end{bmatrix} x)$. However, our numerical tests show that there is a high probability ($\sim 90\%$) for the NN to collapse to the mean value of $y(x)$ (Fig.~\ref{fig:abs}), no matter what kernel initializers (He normal~\citep{he2015delving}, LeCun normal~\citep{lecun1998efficient,klambauer2017self}, Glorot uniform~\citep{glorot2010understanding}) or optimizers (first order or second order including SGD, SGDNesterov~\citep{sutskever2013importance}, AdaGrad~\citep{duchi2011adaptive}, AdaDelta~\citep{zeiler2012adadelta}, RMSProp~\citep{hintonlecture6a}, Adam~\citep{kingma2014adam}, BFGS~\citep{nocedal2006nonlinear}, L-BFGS~\citep{byrd1995limited}) are employed. The training data were sampled from a uniform distribution on $[-\sqrt{3}, \sqrt{3}]$, and the minibatch size was chosen as 128 during training. We find that when this happens, in most cases the bias in the last layer is the mean value of the function $y(x)$, and the composition of all the previous layers is equivalent to a zero function. It can be proved that under these conditions, the gradient vanishes, i.e., the optimization stops (Corollary~\ref{coro:nn_stuck_mean}). For functions of different regularity, we observed the same collapse problem, see Fig.~\ref{fig:xsin5x} for the $C^{\infty}$ function $y(x) = x\sin(5x)$ and Fig.~\ref{fig:stepsin} for the $L^2$ function $y(x) = 1_{\{x > 0\}}+0.2\sin(5x)$.

\iffalse
\begin{table}[htbp]
\caption{Description of optimizers. Default values in TensorFlow are used if not listed here.}\label{tbl:opt}
\centering
\begin{tabular}{lcc}
\toprule \\
Optimizer & Learning rate & Other parameters \\
\midrule \\
SGD & Tuned & --- \\
SGDNesterov~\citep{sutskever2013importance} & Tuned & Momentum 0.9 \\
AdaGrad~\citep{duchi2011adaptive} & 0.01 & --- \\
AdaDelta~\citep{zeiler2012adadelta} & --- & --- \\
RMSProp~\citep{hintonlecture6a} & Tuned & --- \\
Adam~\citep{kingma2014adam} & Tuned & --- \\
BFGS~\citep{nocedal2006nonlinear} & --- & --- \\
L-BFGS~\citep{byrd1995limited} & --- & --- \\
\bottomrule
\end{tabular}
\end{table}
\fi

For multi-dimensional inputs and outputs, this collapse phenomenon is also observed in our simulations. Here, we test the target function $\mathbf{y}(\mathbf{x})$ with $d_{in} = 2$ and $d_{out} = 2$, which can be represented by a 2-layer neural network with width 4, $\mathbf{y}(\mathbf{x}) = \begin{bmatrix} |\mathbf{x}_1 + \mathbf{x}_2 | \\ |\mathbf{x}_1 - \mathbf{x}_2 | \end{bmatrix} = \begin{bmatrix} 1 & 1 & & \\ & & 1 & 1 \end{bmatrix} \text{ReLU}(\begin{bmatrix} 1 & 1 \\ -1 & -1 \\ 1 & -1 \\ -1 & 1 \end{bmatrix}\mathbf{x})$. When training a 10-layer ReLU network with width 4, there is a very high probability for the NN to collapse to the mean value or with low probability to the partial mean value of $\mathbf{y}(\mathbf{x})$ (Fig.~\ref{fig:abs2d}).

\begin{figure}[htbp]
\centering
\includegraphics{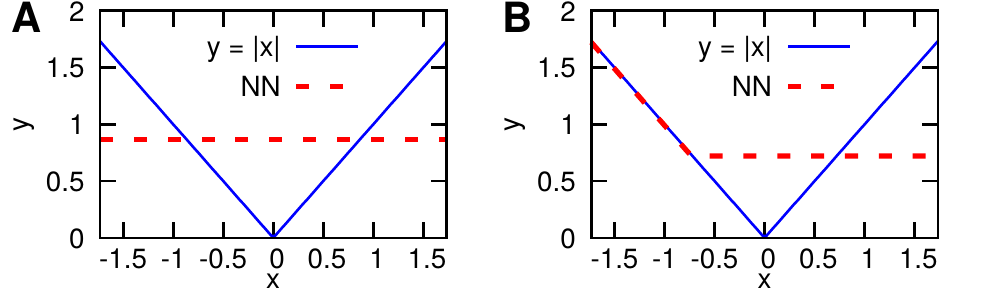}
\caption{Demonstration of the neural network collapse to the mean value (\textbf{A}, with very high probability) or the partial mean value (\textbf{B}, with low probability) for the $C^0$ target function $y(x) = |x|$. The gradient vanishes in both cases (see Corollaries~\ref{coro:nn_stuck_mean} and~\ref{coro:nn_stuck_part_mean}). A 10-layer ReLU neural network with width 2 is employed in both (\textbf{A}) and (\textbf{B}). The biases are initialized to 0, and the weights are randomly initialized from a symmetric distribution. The loss function is MSE.}\label{fig:abs}
\end{figure}

\begin{figure}[htbp]
\centering
\includegraphics{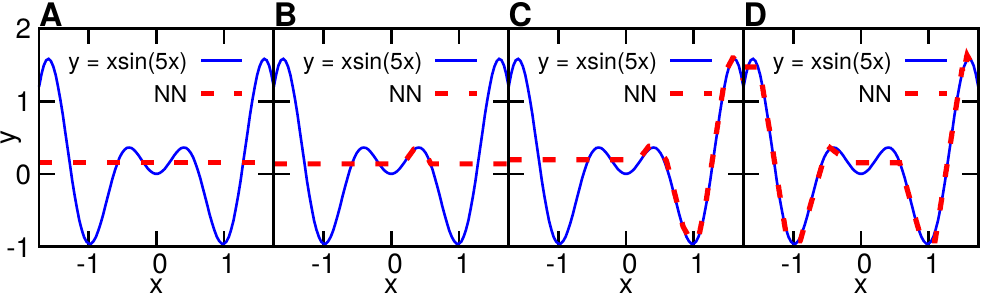}
\caption{Similar behavior for the $C^\infty$ target function $y(x) = x\sin(5x)$. The network parameters, loss function, and initializations are the same as in Fig.~\ref{fig:abs}. (\textbf{A}) corresponds to the mean value of the target function with high probability. (\textbf{B}, \textbf{C}, \textbf{D}) correspond to partial mean values with low probability and are induced by different random initializations.}\label{fig:xsin5x}
\end{figure}

\begin{figure}[htbp]
\centering
\includegraphics{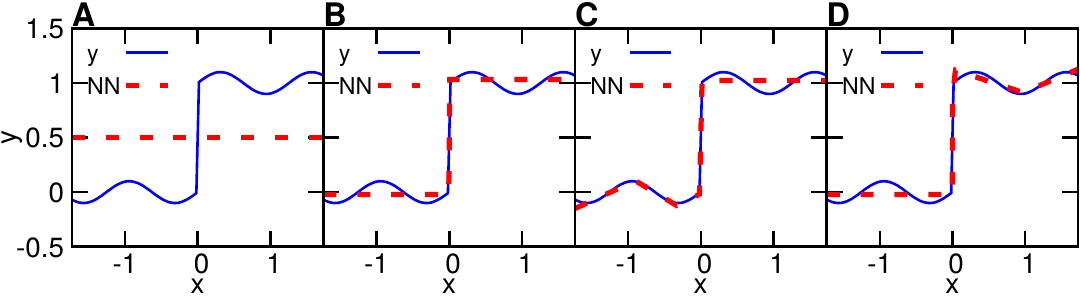}
\caption{Similar behavior for the $L^2$ target function $y(x) = 1_{\{x > 0\}}+0.2\sin(5x)$. The network parameters, loss function, and initializations are the same as in Fig.~\ref{fig:abs}. (\textbf{A}) corresponds to the mean value of the target function with high probability. (\textbf{B}, \textbf{C}, \textbf{D}) correspond to partial mean values with low probability and are induced by different random initializations.}\label{fig:stepsin}
\end{figure}

\begin{figure}[htbp]
\centering
\includegraphics{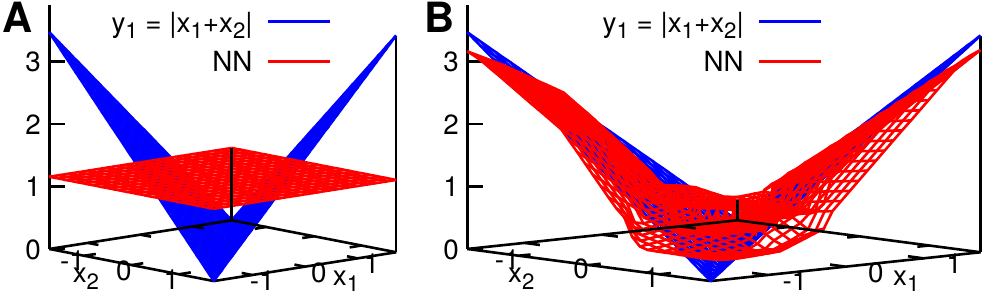}
\caption{Demonstration of the neural network collapse to the mean value (\textbf{A}, with very high probability) or the partial mean value (\textbf{B}, with low probability) for the $C^0$ 2-dimensional (vector) target function $\mathbf{y}(\mathbf{x}) = [|\mathbf{x}_1 + \mathbf{x}_2 |, |\mathbf{x}_1 - \mathbf{x}_2 |]$. The gradient vanishes in both cases (see Corollaries~\ref{coro:nn_stuck_mean} and~\ref{coro:nn_stuck_part_mean}). A 10-layer ReLU neural network with width 4 is employed in both (\textbf{A}) and (\textbf{B}). The biases are initialized to 0, and the weights are initialized from a symmetric distribution. The loss function is MSE.}\label{fig:abs2d}
\end{figure}

We also observed the same collapse problem for other losses, such as the mean absolute error (MAE); the results are summarized in Fig.~\ref{fig:loss} for three different functions with varying regularity. Furthermore, we find that for MSE loss, the constant is the mean value of the target function, while for MAE it is the median value.

\begin{figure}[htbp]
\centering
\includegraphics{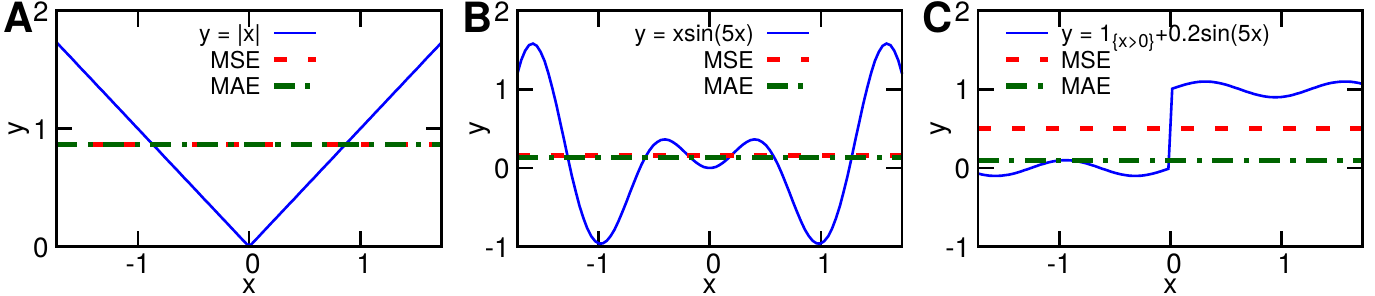}
\caption{Effect of the loss function on the behavior of the collapse of the neural network. MSE (used in Figs ~\ref{fig:abs}, ~\ref{fig:xsin5x}, ~\ref{fig:stepsin}) is compared against the MAE. The collapse of the NN is independent of the loss function (see Theorem ~\ref{thm:nn2mean}).}\label{fig:loss}
\end{figure}

\section{Initialization of ReLU nets}\label{ini_Relu}

As we demonstrated above, when the weights of the ReLU NN are randomly initialized from a symmetric distribution, the deep and narrow NN will collapse with high probability. This type of initialization is widely used in real applications. Here, we demonstrate that this initialization avoids the problem of exploding/vanishing mean activation length, therefore this is beneficial for training neural networks.

We study a feed-forward neural network $\mathcal{N}: \mathbb{R}^{d_{in}} \to \mathbb{R}^{d_{out}}$ with $L$ layers and $N^l$ neurons in the layer $l$ ($N^0 = d_{in}$, $N^L = d_{out}$). The weights and biases in the layer $l$ are an $N^l \times N^{l-1}$ weight matrix $\mathbf{W}^l$ and $\mathbf{b}^l \in \mathbb{R}^{N^l}$, respectively. The input is $\mathbf{x}^0 \in \mathbb{R}^{d_{in}}$, and the neural activity in the layer $l$ is $\mathbf{x}^l \in \mathbb{R}^{N^l}$. The feed-forward dynamics is given by
$$\mathbf{x}^l = \phi(\mathbf{h}^l) \qquad \mathbf{h}^l = \mathbf{W}^l \mathbf{x}^{l-1} + \mathbf{b}^l \quad \text{for } l = 1, \dots, L-1,$$
$$\mathcal{N}(\mathbf{x}^0) \equiv \mathbf{x}^L = \mathbf{h}^L = \mathbf{W}^L \mathbf{x}^{L-1} + \mathbf{b}^L,$$
where $\phi$ is a component-wise activation function. 

Following the work in~\citet{poole2016exponential}, we investigate how the length of the input propagates through neural networks. The normalized squared length of the vector before activation at each layer is defined as
\begin{equation}
q^l = \frac{1}{N_l}\sum_{i=1}^{N_l}(\mathbf{h}^l_i)^2,
\end{equation}
where $\mathbf{h}^l_i$ denotes the entry $i$ of the vector $\mathbf{h}^l$. If the weights and biases are drawn i.i.d.\@ from a zero mean Gaussian with variance $\sigma_w^2/N^{l-1}$ and $\sigma_b^2$ respectively, then the length at layer $l$ can be obtained from its previous layer \citep{poole2016exponential,sedghi2019on}
\begin{equation}\label{eq:length}
\mathbb{E}[q^l] = \sigma_w^2 \int \mathcal{D}z \phi (\sqrt{\mathbb{E}[q^{l-1}]}z)^2 + \sigma_b^2, \quad \text{for } l \geq 2,
\end{equation}
where $\mathcal{D}z=\frac{dz}{\sqrt{2\pi}}e^{-\frac{z^2}{2}}$ is the standard Gaussian measure, and the initial condition is $\mathbb{E}[q^1] = \sigma_w^2 q^0+\sigma_b^2$, $q^0 = \frac{1}{N_0}\mathbf{x}^0\cdot \mathbf{x}^0$. It is worth pointing out that Eq.~\ref{eq:length} is true for ReLU, but it requires that the widths of NN tend to infinity for other activation functions. When $\phi$ is ReLU, the recursion is simplified to
\begin{multline}
\mathbb{E}[q^l] = \sigma_w^2 \int \mathcal{D}z \text{ReLU} (\sqrt{\mathbb{E}[q^{l-1}]}z)^2 + \sigma_b^2 = \sigma_w^2 \int_0^{\infty} \mathcal{D}z (\sqrt{\mathbb{E}[q^{l-1}]}z)^2 + \sigma_b^2 \\
= \sigma_w^2 \mathbb{E}[q^{l-1}] \int_0^{\infty} z^2\mathcal{D}z + \sigma_b^2
= \frac{\sigma_w^2}{2} \mathbb{E}[q^{l-1}] \int_{-\infty}^{\infty} z^2\mathcal{D}z + \sigma_b^2
= \frac{\sigma_w^2}{2} \mathbb{E}[q^{l-1}] + \sigma_b^2.
\end{multline}

For ReLU, He normal~\citep{he2015delving}, i.e., $\sigma_w^2 = 2$ and $\sigma_b = 0$, is widely used. This choice guarantees that $\mathbb{E}[q^l] = \mathbb{E}[q^{l-1}]$, which neither shrinks nor expands the inputs. In fact, this result explains the success of He normal in applications. A parallel work by~\citet{hanin2018start} shows that initializing weights from a symmetric distribution with variance 2/fan-in (fan-in is the dimension of the input of each layer) avoids the problem of exploding/vanishing mean activation length. Here we arrived at the same conclusion but with much less work.

\section{Theoretical analysis of the collapse problem}

In this section, we present the theoretical analysis of the collapse behavior observed in Section~\ref{collapse_NN}, and we also derive an estimate of the probability of this collapse. We start by stating the following assumptions for a ReLU feed-forward neural network $\mathcal{N}(\mathbf{x}^0): \mathbb{R}^{d_{in}} \to \mathbb{R}^{d_{out}}$ with $L$ layers and $N^l$ neurons in the layer $l$ ($N^0 = d_{in}$, $N^L = d_{out}$):
\begin{itemize}
  \item[A1] The domain $\Omega \subset \mathbb{R}^{d_{in}}$ for $\mathcal{N}$ is a connected space with at least two points;
  \item[A2] The weight matrix $\mathbf{W}^l \in \mathbb{R}^{N^l \times N^{l-1}}$ of any layer $l \in \{1,2,\dots,L\}$ is a random matrix, where the joint distribution of $(\mathbf{W}^l_{i1},\mathbf{W}^l_{i2},\dots, \mathbf{W}^l_{iN^{l-1}})$ is absolutely continuous with respect to Lebesgue measure for $i = 1,2,\dots,N^l$.
\end{itemize}
\textbf{Remark}: We point out here that the connectedness in assumption A1 is a very weak requirement for the input space. The weights in a neural network are usually sampled independently from continuous distributions in real applications, and thus the assumption A2 is satisfied at the NN initialization stage; during training, the assumption A2 is usually maintained due to stochastic gradients of minibatch.

\begin{lemma}\label{lem:const_nn}
With assumptions A1 and A2, if $\mathcal{N}(\mathbf{x}^0)$ is a constant function, then there exists a layer $l \in \{1, \dots, L-1\}$ such that $\mathbf{h}^l \leq \mathbf{0}$\footnote{$\mathbf{a} \leq \mathbf{b}$ denotes $\mathbf{a}_i \leq \mathbf{b}_i$ for any index $i$, i.e., component-wise. Similarly for $<$, $>$ and $\geq$.} and $\mathbf{x}^l = \mathbf{0}~\forall \mathbf{x}^0 \in \Omega$, with probability 1 (wp1).
\end{lemma}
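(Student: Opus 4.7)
The plan is a backward induction on the layer index $l$, starting from the hypothesis that $\mathbf{x}^L = \mathcal{N}(\mathbf{x}^0)$ is constant on $\Omega$ and working down to the input. At each step I show that, with probability $1$, either the current layer already satisfies $\mathbf{x}^l \equiv \mathbf{0}$ on $\Omega$ (in which case $\mathbf{h}^l \leq \mathbf{0}$ is forced by $\text{ReLU}(\mathbf{h}^l) = \mathbf{0}$, and we are done) or the constancy property propagates one layer deeper, i.e.\ $\mathbf{x}^{l-1}$ is itself constant on $\Omega$. The induction must terminate in the first alternative at some $l \in \{1,\dots,L-1\}$, because otherwise the chain would reach $l = 0$ and conclude that $\mathbf{x}^0$ is constant on $\Omega$, contradicting A1 (which supplies two distinct points).

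The inductive step runs as follows. Suppose $\mathbf{x}^l$ is constant. For $l = L$ the last layer is affine, so $\mathbf{W}^L \mathbf{x}^{L-1}(\mathbf{x}^0)$ is constant in $\mathbf{x}^0$; for $l < L$ with $\mathbf{x}^l \not\equiv \mathbf{0}$, some coordinate satisfies $\mathbf{x}^l_i \equiv c_i > 0$, whence $\text{ReLU}$ acts as the identity and $\mathbf{h}^l_i \equiv c_i$, giving $\mathbf{W}^l_i \cdot \mathbf{x}^{l-1}(\mathbf{x}^0) = c_i - \mathbf{b}^l_i$ for all $\mathbf{x}^0 \in \Omega$. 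In either case, if $\mathbf{x}^{l-1}$ were non-constant on $\Omega$ there would be a nonzero vector $\mathbf{d} = \mathbf{x}^{l-1}(\mathbf{x}^0_1) - \mathbf{x}^{l-1}(\mathbf{x}^0_2)$ annihilated by $\mathbf{W}^l_i$, forcing $\mathbf{W}^l_i$ into a proper hyperplane of $\mathbb{R}^{N^{l-1}}$. Since by A2 the row $\mathbf{W}^l_i$ is absolutely continuous with respect to Lebesgue measure, and any proper linear subspace is Lebesgue-null, this event has probability $0$.

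I expect the main obstacle is the measurability/conditioning at this step: the hyperplane is itself random, depending on $\mathbf{W}^1,\mathbf{b}^1,\dots,\mathbf{W}^{l-1},\mathbf{b}^{l-1}$ through $\mathbf{x}^{l-1}(\cdot)$. I would handle it by conditioning on those earlier parameters, so that $\mathbf{x}^{l-1}(\cdot)$ becomes deterministic and the hyperplane a fixed null set, and then applying A2 (together with the standard reading of the random-weight model, in which the conditional distribution of $\mathbf{W}^l_i$ given the earlier parameters remains absolutely continuous) to conclude the conditional probability of the bad event is $0$; integrating out the conditioning preserves this, and a union bound over the finitely many layer transitions $l = L, L-1, \dots, 1$ yields the claim with probability $1$.
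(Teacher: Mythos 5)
Your proof is correct, but it takes a genuinely different route from the paper's. The paper argues by forward induction on the depth $L$: it first reduces to the case where the last layer carries a ReLU (via the helper Lemma~\ref{lem:randommatrix}), proves the base case $L=1$ by using connectedness of $\Omega$ and continuity of the network to produce a second nearby input with a distinct positive pre-activation in the coordinate $k$ where $\mathbf{h}_k>0$ (contradicting constancy), and then peels off the first layer and applies the inductive hypothesis to the subnetwork with input domain $\Omega^1=\mathbf{x}^1(\Omega)$. You instead descend from the output layer, and your key observation --- that a constant, not-identically-zero ReLU output has some coordinate equal to a positive constant $c_i$, which forces $\mathbf{h}^l_i\equiv c_i$ and hence $\mathbf{W}^l_i\cdot\mathbf{x}^{l-1}$ to be constant --- replaces the paper's connectedness-and-continuity argument entirely. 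As a consequence you only ever use that $\Omega$ contains two distinct points, never that it is connected, so your argument shows that the connectedness part of A1 is superfluous for this particular lemma. You are also more careful than the paper about the fact that the annihilated difference vector $\mathbf{d}$ is itself random: conditioning on the parameters of layers $1,\dots,l-1$ so that $\mathbf{x}^{l-1}(\cdot)$ and hence $\mathbf{d}$ become deterministic, then applying absolute continuity of the row $\mathbf{W}^l_i$, is exactly the right fix, and the paper silently relies on the same reading of A2 in its own inductive step (where the new input domain $\Omega^1$ depends on $\mathbf{W}^1$). Both proofs ultimately reduce the probabilistic content to the same fact, namely that an absolutely continuous row vector lies in a fixed proper hyperplane with probability zero, and your finite union bound over the $L$ layer transitions closes the argument correctly.
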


\begin{corollary}\label{nn_const_coro}
With assumptions A1 and A2, if $\mathcal{N}(\mathbf{x}^0)$ is bias-free and a constant function, then there exists a layer $l \in \{1, \dots, L-1\}$ such that for any $n \geq l$, it holds $\mathbf{h}^n \leq \mathbf{0}$ and $\mathbf{x}^n = \mathbf{0}$ wp1.
\end{corollary}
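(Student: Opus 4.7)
The plan is to reduce the corollary to Lemma~\ref{lem:const_nn} and then propagate the conclusion forward by a trivial induction that uses the bias-free assumption in an essential way. First I would invoke Lemma~\ref{lem:const_nn} (whose hypotheses A1 and A2 carry over unchanged to the bias-free setting) to obtain some layer $l \in \{1,\dots,L-1\}$ for which $\mathbf{h}^l \leq \mathbf{0}$ and $\mathbf{x}^l = \mathbf{0}$ for every $\mathbf{x}^0 \in \Omega$, with probability 1. This is the base case.

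For the inductive step, suppose $\mathbf{x}^n = \mathbf{0}$ for some $n \geq l$ with $n < L$. Because the network is bias-free, $\mathbf{b}^{n+1} = \mathbf{0}$, so
\begin{equation*}
\mathbf{h}^{n+1} = \mathbf{W}^{n+1}\mathbf{x}^n + \mathbf{b}^{n+1} = \mathbf{W}^{n+1}\mathbf{0} = \mathbf{0},
\end{equation*}
which a fortiori satisfies $\mathbf{h}^{n+1}\leq \mathbf{0}$, and consequently $\mathbf{x}^{n+1} = \phi(\mathbf{h}^{n+1}) = \mathrm{ReLU}(\mathbf{0}) = \mathbf{0}$. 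For the terminal layer the activation is the identity, but the same identity $\mathbf{h}^L = \mathbf{W}^L \mathbf{x}^{L-1} = \mathbf{0}$ gives $\mathbf{x}^L = \mathbf{0}$ and therefore $\mathbf{h}^L \leq \mathbf{0}$. Iterating yields the claim for all $n \geq l$.

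There is essentially no obstacle here beyond making sure the induction extends all the way to $n = L$, since the terminal layer is affine rather than ReLU-activated; but the bias-free hypothesis forces $\mathbf{h}^L = \mathbf{W}^L \mathbf{x}^{L-1}$ to be identically zero once $\mathbf{x}^{L-1} = \mathbf{0}$, so the argument goes through uniformly. The role of the bias-free assumption is exactly to kill the only term ($\mathbf{b}^{n+1}$) that could resurrect a nonzero pre-activation once the signal has been zeroed out by some intermediate layer; without it, one only gets the single-layer conclusion of Lemma~\ref{lem:const_nn}. The probability-1 qualifier is inherited directly from Lemma~\ref{lem:const_nn} and is not weakened by the deterministic propagation in the induction.
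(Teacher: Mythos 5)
Your proof is correct and follows essentially the same route as the paper's: invoke Lemma~\ref{lem:const_nn} to obtain the layer $l$, then use the bias-free hypothesis to propagate $\mathbf{h}^{n+1}=\mathbf{W}^{n+1}\mathbf{x}^n=\mathbf{0}$ and $\mathbf{x}^{n+1}=\mathbf{0}$ forward by induction. Your explicit handling of the final affine layer is a minor addition the paper leaves implicit, but the argument is identical in substance.
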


\begin{lemma}\label{lem:no_gradient}
With assumptions A1 and A2, if $\mathcal{N}(\mathbf{x}^0)$ is a constant function, then any order gradients of the loss function with respect to the weights and biases in layers $1, \dots, l$ vanish, where $l$ is the layer obtained in Lemma~\ref{lem:const_nn}.
\end{lemma}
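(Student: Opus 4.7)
The plan is to show that the training loss, regarded as a function of the parameters $\theta_{\leq l} := (\mathbf{W}^1,\mathbf{b}^1,\dots,\mathbf{W}^l,\mathbf{b}^l)$ with all other parameters held fixed, is locally constant at the current point; every partial derivative, of every order, with respect to components of $\theta_{\leq l}$ then vanishes automatically. Invoking Lemma~\ref{lem:const_nn} fixes (wp1) a layer $l\in\{1,\dots,L-1\}$ at which $\mathbf{h}^l(\mathbf{x}^0)\le\mathbf{0}$ and $\mathbf{x}^l(\mathbf{x}^0)=\mathbf{0}$ for every $\mathbf{x}^0\in\Omega$, so that at the current parameters the downstream computation and hence $\mathcal{N}(\mathbf{x}^0)$ is determined entirely by $\mathbf{b}^{l+1},\mathbf{W}^{l+2},\mathbf{b}^{l+2},\dots,\mathbf{W}^L,\mathbf{b}^L$.

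The key technical step is to upgrade $\mathbf{h}^l(\mathbf{x}^0_j)\le\mathbf{0}$ to strict inequality $\mathbf{h}^l(\mathbf{x}^0_j)<\mathbf{0}$ at each training input $\mathbf{x}^0_j$, wp1. Write $\mathbf{h}^l(\mathbf{x}^0_j)=\mathbf{W}^l\mathbf{x}^{l-1}(\mathbf{x}^0_j)+\mathbf{b}^l$ and condition on the parameters of layers $1,\dots,l-1$, which determine $\mathbf{x}^{l-1}(\mathbf{x}^0_j)$. By assumption A2 the rows of $\mathbf{W}^l$ (together with $\mathbf{b}^l$) have a Lebesgue-absolutely-continuous conditional joint distribution, so each scalar $\mathbf{h}^l_i(\mathbf{x}^0_j)$ has an absolutely continuous marginal; hence $\Pr(\mathbf{h}^l_i(\mathbf{x}^0_j)=0)=0$. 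A union bound over the finitely many pairs $(i,j)$ delivers simultaneous strict negativity.

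Once strict negativity holds at every training input, continuity of $\mathbf{h}^l(\mathbf{x}^0_j)$ as a function of $\theta_{\leq l}$ produces a neighborhood in parameter space on which $\mathbf{h}^l(\mathbf{x}^0_j)<\mathbf{0}$ is preserved, so $\mathbf{x}^l(\mathbf{x}^0_j)=\mathbf{0}$ persists and $\mathcal{N}(\mathbf{x}^0_j)$ is unchanged. Because the loss depends on $\theta_{\leq l}$ only through the outputs $\{\mathcal{N}(\mathbf{x}^0_j)\}_j$, it is constant throughout this neighborhood, and therefore every partial derivative with respect to components of $\theta_{\leq l}$, of any order, is identically zero there.

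The main obstacle is the strict-inequality upgrade: one has to turn the almost-sure non-vanishing of each individual $\mathbf{h}^l_i(\mathbf{x}^0_j)$ into a simultaneous event across all training inputs and all neurons of layer $l$, while correctly handling the conditioning on the earlier-layer parameters that themselves feed into $\mathbf{x}^{l-1}$. After that step, the continuity/local-constancy argument is uniform across orders of differentiation and finishes the proof in one shot.
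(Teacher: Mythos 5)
Your route differs from the paper's: the paper disposes of this lemma in one line ("obvious by backpropagation", i.e.\ the backpropagated error through layer $l$ is multiplied by $\phi'(\mathbf{h}^l)=0$ because $\mathbf{h}^l\le\mathbf{0}$), whereas you try to prove the stronger statement that the loss is \emph{locally constant} in $\theta_{\le l}$, which would indeed give all orders of derivatives at once and is arguably the only clean way to handle "any order gradients". The problem is that your key technical step --- upgrading $\mathbf{h}^l(\mathbf{x}^0_j)\le\mathbf{0}$ to $\mathbf{h}^l(\mathbf{x}^0_j)<\mathbf{0}$ --- fails in exactly the regime the paper cares about. Assumption A2 gives absolute continuity only for the rows of $\mathbf{W}^l$, not for $\mathbf{b}^l$; in the paper's experiments the biases are initialized to $\mathbf{0}$, and Corollary~\ref{nn_const_coro} shows the collapse cascades so that $\mathbf{x}^{l-1}(\mathbf{x}^0_j)=\mathbf{0}$ for some (indeed eventually all) training inputs $\mathbf{x}^0_j$. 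For such an input, $\mathbf{h}^l_i(\mathbf{x}^0_j)=\mathbf{W}^l_{i\cdot}\mathbf{0}+\mathbf{b}^l_i=\mathbf{b}^l_i$ is a point mass, not an absolutely continuous random variable, and equals $0$ exactly when the bias is zero. Your claim that "each scalar $\mathbf{h}^l_i(\mathbf{x}^0_j)$ has an absolutely continuous marginal" is therefore false in general, and the union bound does not deliver simultaneous strict negativity.

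This is not a removable technicality for your argument: when $\mathbf{h}^l_i(\mathbf{x}^0_j)=0$, perturbing $\mathbf{b}^l_i$ upward by $\epsilon>0$ reactivates the neuron and generically changes $\mathcal{N}(\mathbf{x}^0_j)$ linearly in $\epsilon$, while perturbing downward changes nothing, so the loss is \emph{not} locally constant in $\mathbf{b}^l_i$ and in fact has unequal one-sided derivatives there. The lemma as stated is only true under the standard backpropagation convention $\mathrm{ReLU}'(0)=0$ (left derivatives), which is what the paper's chain-rule proof implicitly uses; a local-constancy argument cannot establish it. To salvage your approach you would either have to restrict to the open set of parameters where $\mathbf{h}^l<\mathbf{0}$ strictly on all training inputs (adding an assumption the paper does not make), or fall back on the paper's subgradient/backpropagation computation, where $\phi'(\mathbf{h}^l)=0$ for $\mathbf{h}^l\le\mathbf{0}$ kills the first-order gradients and the piecewise linearity of ReLU handles the higher orders. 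A secondary, smaller issue: your conditioning step needs the rows of $\mathbf{W}^l$ to remain absolutely continuous \emph{conditionally} on the parameters of layers $1,\dots,l-1$, which A2 does not literally assert (it is standard under layerwise independence, and the paper's Lemma~\ref{lem:randommatrix} glosses over the same point), but this is worth flagging.
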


\begin{theorem}\label{thm:nn2mean}
For a ReLU feed-forward neural network $\mathcal{N}(\mathbf{x}^0)$ with assumption A1, if the assumption A2 is satisfied during the initialization, and there exists a layer $l$ such that $\mathbf{x}^l(\mathbf{x}^0) \equiv \mathbf{0}$ for any input $\mathbf{x}^0$, then for any function $\mathbf{y}(\mathbf{x}^0)$ and $\mathbf{x}^0 \in \Omega$, $\mathcal{N}$ is eventually optimized to a constant function when training by a gradient based optimizer. If using $L^2$ loss and $\mathbb{E}_{\mathbf{x}^0}[\mathbf{y}(\mathbf{x}^0)]$ exists, then the resulted constant is $\mathbb{E}_{\mathbf{x}^0}[\mathbf{y}(\mathbf{x}^0)]$, which we write as $\mathbb{E}[\mathbf{y}]$ if no confusion arises; if using $L^1$ loss and the median of the distribution of $\mathbf{y}$ exists, then the resulted constant is the median.
\end{theorem}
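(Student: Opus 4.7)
The plan is to combine the hypothesis with Lemma~\ref{lem:no_gradient} to show that the network stays constant throughout gradient-based training, and then to characterize the resulting constant by the first-order optimality condition of the convex constant-fitting problem that training reduces to.

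First I would exploit the fact that $\mathbf{x}^l \equiv \mathbf{0}$ makes $\mathcal{N}(\mathbf{x}^0)$ independent of $\mathbf{x}^0$: the input to layer $l+1$ is identically zero, so $\mathbf{h}^{l+1}=\mathbf{b}^{l+1}$, and every subsequent layer produces an output determined only by the parameters in layers $l+1,\ldots,L$. Because $\mathcal{N}$ is therefore already a constant function of $\mathbf{x}^0$, Lemma~\ref{lem:no_gradient} applies and all gradients of the loss with respect to $\mathbf{W}^1,\mathbf{b}^1,\ldots,\mathbf{W}^l,\mathbf{b}^l$ vanish. A gradient-based optimizer consequently leaves those parameters unchanged, preserving $\mathbf{x}^l\equiv\mathbf{0}$ along the entire training trajectory and keeping $\mathcal{N}$ constant at every iteration. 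This delivers the first assertion.

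Next I would reduce the training problem to fitting a single constant. Let $\mathbf{c}$ denote the (input-independent) network output; only the parameters in layers $l+1,\ldots,L$ can affect it. Since the last layer carries no activation, $\mathbf{c}=\mathbf{W}^L\mathbf{x}^{L-1}+\mathbf{b}^L$ with $\partial\mathbf{c}/\partial\mathbf{b}^L = I$, so the stationarity condition in $\mathbf{b}^L$ is
\begin{equation*}
\mathbb{E}_{\mathbf{x}^0}\!\left[\nabla_{\mathbf{c}}\,\ell(\mathbf{y}(\mathbf{x}^0),\mathbf{c})\right]=\mathbf{0}.
\end{equation*}
For the $L^2$ loss this reads $2\,\mathbb{E}[\mathbf{c}-\mathbf{y}]=\mathbf{0}$, yielding $\mathbf{c}=\mathbb{E}[\mathbf{y}]$; for the $L^1$ loss the componentwise subgradient condition $\mathbb{E}[\mathrm{sign}(\mathbf{c}-\mathbf{y})]=\mathbf{0}$ identifies $\mathbf{c}$ as the (componentwise) median of $\mathbf{y}$.

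The main obstacle is ensuring that a gradient-based optimizer genuinely reaches this stationary constant rather than stalling at some other value of $\mathbf{c}$. The cleanest route is to observe that, with all other parameters fixed, the loss is convex in $\mathbf{b}^L$ --- strictly so for MSE --- and that $\mathbf{b}^L$ carries a nondegenerate gradient signal from the output. Standard convergence results for (sub)gradient descent on convex functions then drive $\mathbf{b}^L$ to the unique minimizer, while any offset contributed by $\mathbf{W}^L\mathbf{x}^{L-1}$ is simply absorbed by $\mathbf{b}^L$. A minor technicality for $L^1$ is uniqueness of the median, which holds whenever the distribution of $\mathbf{y}$ has no atom at the optimum; otherwise the optimizer still converges to a valid median, matching the statement of the theorem.
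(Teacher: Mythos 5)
Your proposal is correct and follows essentially the same route as the paper's proof: freeze layers $1,\dots,l$ via Lemma~\ref{lem:no_gradient}, conclude the network remains a constant function determined by the remaining layers, and identify the optimal constant as the mean for $L^2$ and the median for $L^1$. Your explicit use of the stationarity condition in $\mathbf{b}^L$ (which has identity Jacobian onto the output) is a welcome elaboration of the step the paper states only as ``the constant with the smallest loss,'' but it is not a different argument.
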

\textbf{Remark}: See Appendices~\ref{app:lem:const_nn}, \ref{app:nn_const_coro}, \ref{app:lem:no_gradient} and \ref{app:thm:nn2mean} for the proofs of Lemma~\ref{lem:const_nn}, Corollary~\ref{nn_const_coro}, Lemma~\ref{lem:no_gradient} and Theorem~\ref{thm:nn2mean}, respectively. MAE and MSE loss used in practice are discrete versions of $L^1$ and $L^2$ loss, respectively, if the size of minibatch is large.

\begin{corollary}\label{coro:nn_stuck_mean}
With assumptions A1 and A2, for a ReLU feed-forward neural network $\mathcal{N}(\mathbf{x}^0)$ and any function $\mathbf{y}(\mathbf{x}^0)$, $\mathbf{x}^0 \in \Omega$, if $\mathcal{N}(\mathbf{x}^0)$ is a constant function with the value $\mathbb{E}[\mathbf{y}]$, then the gradients of the loss function with respect to any weight or bias vanish when using the $L^2$ loss.
\end{corollary}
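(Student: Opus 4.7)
The plan is to split the parameter set into two groups based on Lemma~\ref{lem:const_nn} and handle them separately. Since $\mathcal{N}(\mathbf{x}^0) \equiv \mathbb{E}[\mathbf{y}]$ is by hypothesis a constant function, Lemma~\ref{lem:const_nn} yields, wp1, some layer $l \in \{1,\dots,L-1\}$ with $\mathbf{h}^l \le \mathbf{0}$ and $\mathbf{x}^l(\mathbf{x}^0) = \mathbf{0}$ for every $\mathbf{x}^0 \in \Omega$. The parameters of $\mathcal{N}$ live either in the layers $1,\dots,l$ (the ``already-collapsed'' part) or in the layers $l+1,\dots,L$ (the ``post-collapse'' part), and I would treat these cases in turn.

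For parameters in layers $1,\dots,l$, I would invoke Lemma~\ref{lem:no_gradient} directly: because $\mathcal{N}$ is constant, every gradient of the loss with respect to any weight or bias in these layers vanishes. This disposes of the first group without any calculation specific to the $L^2$ loss or to the value $\mathbb{E}[\mathbf{y}]$.

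For a parameter $\theta$ (weight or bias) in a layer $k \in \{l+1,\dots,L\}$, the key observation I would exploit is that all pre-activations $\mathbf{h}^k$ and activations $\mathbf{x}^k$ for $k \ge l$ are independent of $\mathbf{x}^0$. Indeed, from $\mathbf{x}^l \equiv \mathbf{0}$ the forward pass from layer $l$ onward is driven only by the (constant) biases $\mathbf{b}^{l+1},\dots,\mathbf{b}^L$ and by the weight matrices, so $\mathbf{x}^k(\mathbf{x}^0)$ does not depend on $\mathbf{x}^0$. Reading off the standard backpropagation expressions for $\partial \mathcal{N}/\partial \mathbf{W}^k_{ij}$ and $\partial \mathcal{N}/\partial \mathbf{b}^k_i$ (which involve only the activations and pre-activations at layers $\ge l$), one sees that $\nabla_\theta \mathcal{N}(\mathbf{x}^0)$ is likewise constant in $\mathbf{x}^0$.

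With this in hand, the $L^2$ loss $\mathcal{L} = \mathbb{E}_{\mathbf{x}^0}[\|\mathcal{N}(\mathbf{x}^0) - \mathbf{y}(\mathbf{x}^0)\|^2]$ gives
\[
\nabla_\theta \mathcal{L} = 2\,\mathbb{E}_{\mathbf{x}^0}\bigl[(\nabla_\theta \mathcal{N}(\mathbf{x}^0))^{\mathsf T} (\mathcal{N}(\mathbf{x}^0) - \mathbf{y}(\mathbf{x}^0))\bigr] = 2\,(\nabla_\theta \mathcal{N})^{\mathsf T}\,\mathbb{E}_{\mathbf{x}^0}[\mathbb{E}[\mathbf{y}] - \mathbf{y}(\mathbf{x}^0)] = \mathbf{0},
\]
since $\nabla_\theta \mathcal{N}$ factors out of the expectation and the remaining term is $\mathbb{E}[\mathbf{y}] - \mathbb{E}[\mathbf{y}] = \mathbf{0}$. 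There is no real obstacle in the argument; the only mildly delicate point is justifying that $\nabla_\theta \mathcal{N}$ is $\mathbf{x}^0$-independent for $\theta$ in the post-collapse layers, which reduces to tracing the backprop formula and noting that every factor it contains has already been shown to be constant on $\Omega$.
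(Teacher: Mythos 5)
Your proof is correct and follows essentially the same route as the paper's: Lemma~\ref{lem:const_nn} produces the zero layer $l$, Lemma~\ref{lem:no_gradient} kills the gradients in layers $1,\dots,l$, and the fact that $\mathbb{E}[\mathbf{y}]$ is the $L^2$-optimal constant kills the rest. The only difference is that you make explicit the step the paper leaves implicit --- that $\nabla_\theta \mathcal{N}$ is constant in $\mathbf{x}^0$ for post-collapse parameters, so the loss gradient factors as $2(\nabla_\theta\mathcal{N})^{\mathsf T}\mathbb{E}_{\mathbf{x}^0}[\mathbb{E}[\mathbf{y}]-\mathbf{y}(\mathbf{x}^0)]=\mathbf{0}$ --- which is a welcome elaboration rather than a departure.
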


Corollary~\ref{coro:nn_stuck_mean} can be generalized to the following corollary including more general converged mean states.

\begin{corollary}\label{coro:nn_stuck_part_mean}
With assumptions A1 and A2, for a ReLU feed-forward neural network $\mathcal{N}(\mathbf{x}^0)$ and any function $\mathbf{y}(\mathbf{x}^0)$, $\mathbf{x}^0 \in \Omega$, if $\exists K_1, \dots, K_n \subset \Omega$ and each $K_i$ is a connected domain with at least two points, such that
$$\mathcal{N}(\mathbf{x}^0) = \begin{cases} \mathbf{y}(\mathbf{x}^0) & \mathbf{x}^0 \in \Omega \setminus \cup_{i=1}^n K_i \\ \mathbb{E}_{\mathbf{x}^0_{K_i}}[\mathbf{y}(\mathbf{x}^0_{K_i})] & \mathbf{x}^0 \in K_i \quad \text{for }i=1, \dots, n \end{cases},$$
then the gradients of the loss function with respect to any weight or bias vanish when using the $L^2$ loss. Here $\mathbf{x}^0_{K_i}$ is the random variable of $\mathbf{x}^0$ restricted to $K_i$.
\end{corollary}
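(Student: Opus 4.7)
The plan is to decompose the $L^2$ loss as an integral over $\Omega = (\Omega \setminus \bigcup_{i=1}^n K_i) \cup \bigcup_{i=1}^n K_i$ and show that each piece contributes zero to every partial derivative. On $\Omega \setminus \bigcup_i K_i$ the residual $\mathcal{N}(\mathbf{x}^0) - \mathbf{y}(\mathbf{x}^0)$ vanishes pointwise, so the corresponding part of $\nabla_\theta L$ is trivially zero. The work is therefore to show that for every $i$ and every weight or bias $\theta$, the partial derivative of $\int_{K_i}\|\mathcal{N} - \mathbf{y}\|^2\, d\mu$ with respect to $\theta$ is zero.

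For each fixed $i$, I would first invoke Lemma~\ref{lem:const_nn} with $K_i$ playing the role of the input domain $\Omega$ — legitimate because A1 holds on $K_i$ by hypothesis and A2 is intrinsic to the weight distribution. Since $\mathcal{N}|_{K_i}$ is the constant $\mathbb{E}_{\mathbf{x}^0_{K_i}}[\mathbf{y}(\mathbf{x}^0_{K_i})]$, the lemma supplies, wp1, a layer $l_i \in \{1,\dots,L-1\}$ with $\mathbf{h}^{l_i}(\mathbf{x}^0) \leq \mathbf{0}$ and $\mathbf{x}^{l_i}(\mathbf{x}^0) = \mathbf{0}$ for every $\mathbf{x}^0 \in K_i$.

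I then split parameters into two groups according to this $l_i$. For $\theta$ in layers $1,\dots,l_i$, Lemma~\ref{lem:no_gradient} applied to the $K_i$-restricted loss (its hypothesis only requires $\mathcal{N}$ to be constant on the domain of integration) immediately yields that this partial derivative is zero. For $\theta$ in layers $l_i+1,\dots,L$, the key observation is that once $\mathbf{x}^{l_i} \equiv \mathbf{0}$ on $K_i$, every subsequent forward quantity $\mathbf{h}^k, \mathbf{x}^k$ for $k > l_i$, and hence $\mathcal{N}$ itself together with the parameter-gradient $\nabla_\theta \mathcal{N}(\mathbf{x}^0) \equiv G_{i,\theta}$, are independent of $\mathbf{x}^0 \in K_i$. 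Pulling the constant $G_{i,\theta}$ out of the integral gives
\[
\nabla_\theta \int_{K_i}\|\mathcal{N}-\mathbf{y}\|^2\, d\mu = 2\, G_{i,\theta}^{\top} \int_{K_i}\bigl(\mathbb{E}_{\mathbf{x}^0_{K_i}}[\mathbf{y}] - \mathbf{y}(\mathbf{x}^0)\bigr)\, d\mu = 0,
\]
by the centering property of the conditional mean.

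The main subtlety is the upper-layer case: one must use that the collapse of activations at layer $l_i$ makes $\mathcal{N}$ truly constant — not merely equal on average — on $K_i$, so that the gradient factor factors out of the integral and the residual integrates against a constant to zero. Summing these vanishing contributions over $i$ and combining with the trivially zero contribution from $\Omega \setminus \bigcup_i K_i$ yields $\nabla_\theta L = 0$ for every weight or bias $\theta$.
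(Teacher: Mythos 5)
Your proposal is correct and follows the same route as the paper: decompose the loss over $\Omega\setminus\cup_i K_i$ (where the residual vanishes pointwise) and over each $K_i$ (where $\mathcal{N}$ is constant, so Lemma~\ref{lem:const_nn} gives a collapsed layer, Lemma~\ref{lem:no_gradient} kills the lower-layer gradients, and the constancy of $\nabla_\theta\mathcal{N}$ on $K_i$ lets the zero-mean residual annihilate the upper-layer gradients). The paper compresses the per-$K_i$ step into ``similar to Corollary~\ref{coro:nn_stuck_mean}''; your version simply makes that step explicit.
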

See Appendices~\ref{app:coro:nn_stuck_mean} and~\ref{app:coro:nn_stuck_part_mean} for the proofs of Corollaries~\ref{coro:nn_stuck_mean} and~\ref{coro:nn_stuck_part_mean}. We can see that Corollary~\ref{coro:nn_stuck_mean} is a special case of Corollary~\ref{coro:nn_stuck_part_mean} with $\cup_{i=1}^n K_i = \Omega$.

\begin{lemma}\label{lem:zero}
Let us assume that a one-layer ReLU feed-forward neural network $\mathcal{N}_1$ is initialized independently by symmetric nonzero distributions, i.e., any weight or bias of $\mathcal{N}_1$ is initialized by a symmetric nonzero distribution, which can be different for different parameters. Then, for any fixed input the corresponding output is zero with probability $(1/2)^{d_{out}}$, except the special case where all biases and the input are zero yielding that the output is always zero.
\end{lemma}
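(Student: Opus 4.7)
The plan is to write $\mathcal{N}_1(\mathbf{x}^0)=\text{ReLU}(\mathbf{W}^1\mathbf{x}^0+\mathbf{b}^1)$ and reduce the multidimensional probability statement to a coordinate-wise one on the $d_{out}$ pre-activations $Y_i := \sum_{j=1}^{d_{in}}\mathbf{W}^1_{ij}\mathbf{x}^0_j+\mathbf{b}^1_i$. The output vector is the zero vector exactly when $Y_i\le 0$ for every $i$, so it suffices to establish (i) $\Pr[Y_i\le 0]=1/2$ for each fixed $i$, and (ii) that the events $\{Y_i\le 0\}$ are mutually independent across $i$. The conclusion $\Pr[\mathcal{N}_1(\mathbf{x}^0)=\mathbf{0}]=(1/2)^{d_{out}}$ then follows by taking the product.

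For (i), each $Y_i$ is a deterministic linear combination of the independent, symmetric-about-$0$ random variables $\mathbf{W}^1_{i1},\dots,\mathbf{W}^1_{i d_{in}},\mathbf{b}^1_i$. Scaling a symmetric-about-$0$ variable by a constant preserves symmetry about $0$, and sums of independent such variables are again symmetric about $0$, so $Y_i$ is symmetric about $0$. Symmetry yields $\Pr[Y_i<0]=\Pr[Y_i>0]=(1-\Pr[Y_i=0])/2$, so $\Pr[Y_i\le 0]=1/2$ as soon as $\Pr[Y_i=0]=0$. This atomlessness at zero is inherited from the paper's continuity hypothesis A2: an absolutely continuous weight paired with any nonzero $\mathbf{x}^0_j$, or an absolutely continuous bias $\mathbf{b}^1_i$, makes $Y_i$ absolutely continuous and therefore places no mass at $0$. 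For (ii), since every weight and bias is drawn independently, the entire rows $(\mathbf{W}^1_{i,\cdot},\mathbf{b}^1_i)$ are mutually independent across $i$, hence so are the $Y_i$, hence so are the events $\{Y_i\le 0\}$.

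The one obstacle is the degenerate case singled out in the statement: if $\mathbf{x}^0=\mathbf{0}$ and every bias is the constant $0$, then $Y_i\equiv 0$ deterministically and the output is identically $\mathbf{0}$ with probability one rather than $(1/2)^{d_{out}}$. In every other situation at least one term contributing to $Y_i$ is a nondegenerate continuous symmetric variable, the atomlessness step goes through, and the symmetry-plus-independence calculation gives the claimed probability. I expect no subtleties beyond making the atomlessness-at-$0$ argument explicit, since the rest is a direct coordinate-wise computation.
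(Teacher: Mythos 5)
Your proposal is correct and follows essentially the same route as the paper: reduce to the pre-activations $Y_i$, use symmetry of the (independent, symmetric) weights and bias to get $\Pr[Y_i \le 0] = 1/2$, and multiply over the $d_{out}$ independent rows. You are in fact slightly more careful than the paper in making explicit that $\Pr[Y_i = 0] = 0$ (atomlessness at zero) is needed to turn symmetry into exactly $1/2$, a step the paper's proof takes for granted.
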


\begin{theorem}\label{thm:zero_nobiasinit}
If a ReLU feed-forward neural network $\mathcal{N}$ with $L$ layers assembled width $N^1, \dots, N^L$ is initialized randomly by symmetric nonzero distributions for weights and zero biases, then for any fixed nonzero input, the corresponding output is zero with probability $1-\Pi_{l=1}^L(1-(1/2)^{N^l})$ if the last layer also employs ReLU activation, otherwise with the probability $1-\Pi_{l=1}^{L-1}(1-(1/2)^{N^l})$.
\end{theorem}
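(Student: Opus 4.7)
The plan is to reduce the theorem to a layer-by-layer application of Lemma~\ref{lem:zero}, using two facts: once any layer outputs $\mathbf{0}$ the zero propagates through all subsequent layers (because biases are initialized to zero), and the weight matrices of different layers are mutually independent by hypothesis.

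I would first set up the event chain. Let $E_l = \{\mathbf{x}^l = \mathbf{0}\}$. Since every bias is zero, $\mathbf{x}^{l-1} = \mathbf{0}$ forces $\mathbf{h}^l = \mathbf{W}^l \mathbf{0} + \mathbf{0} = \mathbf{0}$ and hence $\mathbf{x}^l = \mathrm{ReLU}(\mathbf{0}) = \mathbf{0}$, so $E_1 \subseteq E_2 \subseteq \cdots \subseteq E_L$ in the all-ReLU case. Equivalently, $\{\mathbf{x}^L \neq \mathbf{0}\} = \bigcap_{l=1}^L \{\mathbf{x}^l \neq \mathbf{0}\}$. Conditional on the event $\{\mathbf{x}^{l-1} \neq \mathbf{0}\}$ and on the specific realization of $\mathbf{x}^{l-1}$, the matrix $\mathbf{W}^l$ is independent of all preceding layers, so layer $l$ is exactly a one-layer ReLU subnet with zero biases acting on a fixed nonzero input. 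Lemma~\ref{lem:zero} then gives $P(\mathbf{x}^l = \mathbf{0} \mid \mathbf{x}^{l-1} \neq \mathbf{0}) = (1/2)^{N^l}$, and crucially this value does not depend on the particular nonzero $\mathbf{x}^{l-1}$. Multiplying the conditionals via the chain rule yields $P\bigl(\bigcap_{l=1}^L \{\mathbf{x}^l \neq \mathbf{0}\}\bigr) = \prod_{l=1}^L\bigl(1 - (1/2)^{N^l}\bigr)$, and taking the complement gives the claimed formula.

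For the case where the final layer has no ReLU, I would stop the induction at layer $L-1$ and treat layer $L$ separately. If $\mathbf{x}^{L-1} = \mathbf{0}$, then $\mathbf{x}^L = \mathbf{W}^L \mathbf{0} = \mathbf{0}$; if $\mathbf{x}^{L-1} \neq \mathbf{0}$, then each coordinate of $\mathbf{W}^L \mathbf{x}^{L-1}$ is a nontrivial linear combination of independent continuous symmetric weights and therefore vanishes with probability zero. Hence $P(\mathbf{x}^L = \mathbf{0}) = P(\mathbf{x}^{L-1} = \mathbf{0})$, yielding the stated $1 - \prod_{l=1}^{L-1}(1 - (1/2)^{N^l})$.

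The main subtlety throughout is the conditioning step: I am invoking Lemma~\ref{lem:zero} on a ``fixed'' input $\mathbf{x}^{l-1}$ that is really a random quantity determined by earlier layers. This is handled by a standard disintegration argument using the independence of $\mathbf{W}^l$ from $(\mathbf{W}^1, \ldots, \mathbf{W}^{l-1})$, combined with the fact that the conditional probability supplied by Lemma~\ref{lem:zero} is the constant $(1/2)^{N^l}$ for \emph{every} nonzero realization of $\mathbf{x}^{l-1}$, so the product formula drops out without having to compute an expectation over the distribution of $\mathbf{x}^{l-1}$.
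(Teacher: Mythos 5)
Your proposal is correct and follows essentially the same route as the paper: condition layer by layer on $\{\mathbf{x}^{l-1}\neq\mathbf{0}\}$, apply Lemma~\ref{lem:zero} to get the constant conditional probability $(1/2)^{N^l}$, note that a zero activation propagates forward because the biases are zero, and multiply; the final non-ReLU layer is handled identically. Your explicit disintegration remark only makes rigorous a step the paper's proof asserts implicitly, so there is no substantive difference.
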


See Appendices~\ref{app:lem:zero} and~\ref{app:thm:zero_nobiasinit} for the proofs of Lemma~\ref{lem:zero} and Theorem~\ref{thm:zero_nobiasinit}. Although biases are initialized to 0 in most applications, for the sake of completeness, we also consider the case where biases are not initialized to 0.

\begin{prop}\label{prop:zero_biasinit}
If a ReLU feed-forward neural network $\mathcal{N}$ with $L$ layers assembled width $N^1, \dots, N^L$ is initialized randomly by symmetric nonzero distributions (weights and biases), then for any fixed nonzero input, the corresponding output is zero with probability $(1/2)^{N^L}$ if the last layer also employs ReLU activation, otherwise the output is equal to the last bias $\mathbf{b}^L$ with probability $(1/2)^{N^{L-1}}$.
\end{prop}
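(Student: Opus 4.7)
The plan is to reduce each case to Lemma~\ref{lem:zero} by conditioning on all the parameters of the layers preceding the final one. The key observation is that $\mathbf{x}^{L-1}$ is determined by the fixed input $\mathbf{x}^0$ and the parameters of layers $1,\dots,L-1$, hence is independent of $(\mathbf{W}^L,\mathbf{b}^L)$, which are themselves drawn from symmetric nonzero distributions satisfying the hypothesis of Lemma~\ref{lem:zero}.

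For the case where the last layer carries a ReLU, I would treat the last layer as a one-layer ReLU network $\mathcal{N}_1$ taking the (random) input $\mathbf{x}^{L-1}$. Conditionally on any realization $\mathbf{x}^{L-1}=\mathbf{v}$, Lemma~\ref{lem:zero} applies with $d_{out}=N^L$, giving $P(\mathcal{N}(\mathbf{x}^0)=\mathbf{0}\mid \mathbf{x}^{L-1}=\mathbf{v})=(1/2)^{N^L}$. The exceptional clause of Lemma~\ref{lem:zero} requires both the input and all biases to vanish; since $\mathbf{b}^L$ is drawn from a nonzero symmetric distribution, that event has probability zero and may be ignored. Integrating over the marginal law of $\mathbf{x}^{L-1}$ yields $P(\mathcal{N}(\mathbf{x}^0)=\mathbf{0})=(1/2)^{N^L}$ unconditionally.

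For the case of a linear last layer, note that $\mathcal{N}(\mathbf{x}^0)=\mathbf{b}^L$ iff $\mathbf{W}^L\mathbf{x}^{L-1}=\mathbf{0}$. Conditional on $\mathbf{x}^{L-1}\neq \mathbf{0}$, each coordinate $\mathbf{W}^L_i\cdot \mathbf{x}^{L-1}$ is a nontrivial linear combination of independent continuous symmetric random variables and hence has no atom at $0$, so this regime contributes zero probability. Therefore $P(\mathcal{N}(\mathbf{x}^0)=\mathbf{b}^L)=P(\mathbf{x}^{L-1}=\mathbf{0})$, and I would compute the latter by applying the ReLU argument above to the $(L{-}1)$st layer: viewing $\mathbf{x}^{L-1}=\mathrm{ReLU}(\mathbf{W}^{L-1}\mathbf{x}^{L-2}+\mathbf{b}^{L-1})$ as a one-layer ReLU net with input $\mathbf{x}^{L-2}$, Lemma~\ref{lem:zero} (again with the exceptional clause ruled out by the nonzero biases) produces $P(\mathbf{x}^{L-1}=\mathbf{0})=(1/2)^{N^{L-1}}$.

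The main delicate point is handling the measure-zero degenerate configurations (namely $\mathbf{x}^{L-1}=\mathbf{0}$ in the ReLU case and $\mathbf{W}^L\mathbf{x}^{L-1}=\mathbf{0}$ with $\mathbf{x}^{L-1}\neq \mathbf{0}$ in the linear case) without letting latent atoms distort the clean powers of $1/2$. This requires reading \emph{symmetric nonzero distribution} as a continuous distribution without a point mass at zero, which is the regime in which Lemma~\ref{lem:zero} is already proved; granted this convention, the reduction is essentially automatic and uses no new ideas beyond Lemma~\ref{lem:zero} and the across-layer independence of the initialization.
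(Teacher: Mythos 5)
Your proposal is correct and takes essentially the same route as the paper's own proof: both cases are handled by conditioning on $\mathbf{x}^{L-1}$ (which is independent of the last layer's parameters) and invoking Lemma~\ref{lem:zero}, with the linear-output case reduced to $\mathbb{P}(\mathbf{x}^{L-1}=\mathbf{0})=(1/2)^{N^{L-1}}$ via the ReLU case applied to the first $L-1$ layers. Your explicit dismissal of the measure-zero event $\{\mathbf{W}^L\mathbf{x}^{L-1}=\mathbf{0},\ \mathbf{x}^{L-1}\neq\mathbf{0}\}$ is a touch more careful than the paper, which leaves that step implicit.
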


See Appendix~\ref{app:prop:zero_biasinit} for the proof of Proposition~\ref{prop:zero_biasinit}. We note that Theorem~\ref{thm:zero_nobiasinit} provides the probability for any given input, but in Theorem~\ref{thm:nn2mean} it requires that the entire neural network is a zero function. Hence, the probability in Theorem~\ref{thm:zero_nobiasinit} is an upper bound. In the following theorem, we give a theoretical formula of the probability for the NN with width 2.

\begin{prop}\label{prop:nnwidth2}
Suppose the origin is an interior point of $\Omega$. Consider a bias-free ReLU neural network with $d_{in}=1$, width 2 and $L$ layers, and weights are initialized randomly by symmetric nonzero distributions. Then for this neural network, the probability of being initialized to a constant function is the last component of $\pi^L$, where
\begin{equation}
\pi^L = P^{L-1} \pi^1,
\end{equation}
with $\pi^1$ and $P$ being the probability distribution after the first layer and the probability transition matrix when one more layer is added, respectively. Here every layer employs the ReLU activation.
\end{prop}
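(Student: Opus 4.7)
The plan is to set up a finite-state Markov chain whose states encode the ``shape'' of the vector field $\mathbf{x}^l(\cdot)$ at each layer and whose absorbing state is the all-zero (constant) function, then identify the collapse probability with the last entry of $\pi^L$.

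First, since the network is bias-free and $d_{in}=1$, an induction on $l$ shows that $\mathbf{x}^l(x)$ is piecewise linear in $x$ with the only breakpoint at $x=0$ and $\mathbf{x}^l(0)=\mathbf{0}$, so $\mathbf{x}^l$ is fully captured by two nonnegative slope vectors $\boldsymbol{\alpha}^l, \boldsymbol{\gamma}^l \in \mathbb{R}^2_{\geq 0}$ describing the function on $\{x > 0\}$ and on $\{x < 0\}$ respectively. They evolve by the coupled recursion
\begin{equation*}
[\boldsymbol{\alpha}^{l+1}\ \boldsymbol{\gamma}^{l+1}] = \bigl(\mathbf{W}^{l+1}\,[\boldsymbol{\alpha}^l\ \boldsymbol{\gamma}^l]\bigr)_+,
\end{equation*}
with $(\cdot)_+$ applied elementwise. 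Because $0$ is an interior point of $\Omega$, Lemma~\ref{lem:const_nn} gives that $\mathcal{N}$ is a constant function iff $M^l := [\boldsymbol{\alpha}^l\ \boldsymbol{\gamma}^l] = 0$ for some $l \leq L$, so the collapse event is the hitting time of the all-zero state for the evolution of $M^l$.

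Next, I would take the Markov state at layer $l$ to be the equivalence class of the $2 \times 2$ zero/nonzero pattern of $M^l$, quotienting by permutation of the two rows (the rows of $\mathbf{W}^{l+1}$ are iid) and swap of the two columns (the weight distribution is symmetric, so the two sides of $x=0$ are interchangeable). This yields a finite state space with the all-zero class as the absorbing state. The initial distribution $\pi^1$ is read off directly from the sign pattern of $\mathbf{W}^1 \in \mathbb{R}^{2 \times 1}$, whose two entries independently take either sign with probability $1/2$, giving $M^1 = [(\mathbf{W}^1)_+\ (\mathbf{W}^1)_-]$. The transition matrix $P$ is computed class by class by evaluating the distribution of the shape of $(\mathbf{W}^{l+1} M^l)_+$ under iid symmetric entries of $\mathbf{W}^{l+1}$: the independence of the rows of $\mathbf{W}^{l+1}$ factors the joint shape row by row, and within each row the symmetry of each entry of $\mathbf{W}^{l+1}$ together with the scale invariance $M^l \mapsto cM^l$ forces the sign of each linear combination to depend only on the support pattern of the relevant column of $M^l$. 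Ordering the states so the absorbing class is last, $\pi^L = P^{L-1}\pi^1$ has the collapse probability as its last component.

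The hardest part is the within-row analysis when some row of $M^l$ has both entries positive, i.e., a component of $\mathbf{x}^l$ is positive on both sides of $x=0$: the two linear combinations $\mathbf{W}^{l+1}_{i,\cdot}\,\boldsymbol{\alpha}^l$ and $\mathbf{W}^{l+1}_{i,\cdot}\,\boldsymbol{\gamma}^l$ share common random coordinates, and their joint sign could a priori depend on the magnitudes inside that row. I would handle this by augmenting the shape-class bookkeeping with the equivalence class of $M^l$ modulo positive rescaling and checking that, under the symmetries already imposed, the resulting enlarged chain reduces to finitely many effective states whose transitions are layer-independent, so that the Markov chain of the proposition is truly finite-state.
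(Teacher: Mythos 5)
Your overall strategy coincides with the paper's: both reduce the bias-free width-2 network on a neighborhood of the origin to the evolution of a $2\times 2$ slope matrix, classify layers by the zero/nonzero pattern of that matrix (the paper keeps all $16$ cases; your quotient by row permutation and column swap is only a cosmetic compression of the same state space), and read the collapse probability off $\pi^L = P^{L-1}\pi^1$ with $\pi^1$ determined by the signs of the two entries of $\mathbf{W}^1$. You also correctly isolate the crux: when some neuron is active on both sides of $x=0$, the two linear forms $\mathbf{W}^{l+1}_{i\cdot}\boldsymbol{\alpha}^l$ and $\mathbf{W}^{l+1}_{i\cdot}\boldsymbol{\gamma}^l$ share random coordinates, so the joint law of their signs depends on more than the support pattern of $M^l$.

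However, your proposed resolution of that crux does not go through. Quotienting $M^l$ by positive rescaling and by the row/column symmetries does not yield finitely many effective states: in the pattern where all four entries are nonzero, the residual information is the pair of directions of the columns $\boldsymbol{\alpha}^l$ and $\boldsymbol{\gamma}^l$, a genuinely two-parameter continuous family, and the one-step transition probabilities between patterns vary continuously with the angle between these directions (for fixed directions the paper computes them as $\left(\angle(\boldsymbol{\omega},\boldsymbol{\omega}^*)/2\pi\right)^2$ and the like). Hence the process on support patterns is not automatically a Markov chain with layer-independent transitions, and no finite augmentation of the state space fixes this. What the paper actually does at this point is integrate the angle-dependent transition probabilities against a uniform angular distribution over the relevant quadrants, producing the explicit entries $17/96$, $7/48$, $1/32$, $1/96$, \dots of $P$; this is the step your sketch is missing. (It is worth noting that even the paper's computation silently assumes isotropy of the weight rows and that the conditional law of the column directions given the pattern is uniform and the same at every layer, so your instinct about where the difficulty lies is sound --- but the finite-quotient argument cannot substitute for the angular integration, and some stationarity of that angular law still has to be established.)
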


See Appendix~\ref{app:prop:nnwidth2} for the derivation of $\pi^1$ and $P$. For general cases, we found that it is hard to obtain an explicit expression for the probability, so we used numerical simulations instead, where 1 million samples of random initialization are used to calculate each probability estimation. We show both theoretically (Theorem~\ref{thm:zero_nobiasinit}, Propositions~\ref{prop:zero_biasinit} and~\ref{prop:nnwidth2}) and numerically that NN has the same probability to collapse no matter what symmetric distributions are used, even if different distributions are used for different weights. On the other hand, to keep the collapse probability less than $p$, because the probability obtained in Theorem~\ref{thm:zero_nobiasinit} is an upper bound, which corresponds to a safer maximum number of layers, we have that $1-\Pi_{l=1}^L(1-(1/2)^{N}) \leq p$, which implies the upper bound of the depth of NN
\begin{equation}\label{eq:max_layer}
L \leq \frac{\ln (1-p)}{\ln (1-(1/2)^N)}.
\end{equation}

Theorem~\ref{thm:zero_nobiasinit} shows that when the NN gets deeper and narrower, the probability of the NN initialized to a zero function is higher (Fig.~\ref{fig:p_zeroinit_din1}A). Hence, we have higher probability of vanishing gradient in almost all the layers, rather than just some neurons. In our experiments, we also found that there is very high probability that the gradient is 0 for all parameters except in the last layer, because ReLU is not used in the last layer. During the optimization, the neural network thus can only optimize the parameters in the last layer (Theorem~\ref{thm:nn2mean}). When we design a neural network, we should keep the probability less than 1\% or 10\%. As a practical guide, we constructed a diagram shown in Fig.~\ref{fig:p_zeroinit_din1}B that includes both theoretical predictions and our numerical tests. We see that as the number of layers increases, the numerical tests match closer the theoretical results. It is clear from the diagram that a 10-layer NN of width 10 has a probability of only 1\% to collapse whereas a 10-layer NN of width 5 has a probability greater than 10\% to collapse; for width of three the probability is greater than 60\%.

\begin{figure}[htbp]
\centering
\includegraphics{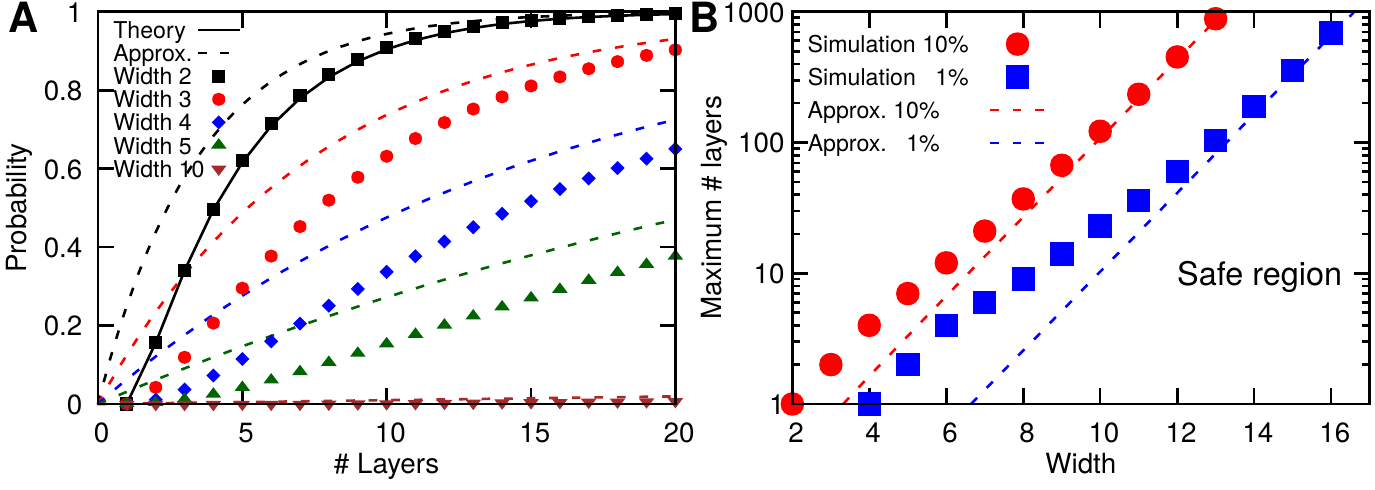}
\caption{Probability of a ReLU NN to collapse and the safe operating region. (\textbf{A}) Probability of NN to collapse as a function of the number of layers for different widths. The solid black line represents the theoretical probability (Proposition~\ref{prop:nnwidth2}). The dash lines represent the approximated probability (Theorem~\ref{thm:zero_nobiasinit}). The symbols represent our numerical tests. Similar colors correspond to the same width. A ReLU feed-forward NN is more likely to become a zero function when it is deeper and narrower. A bias-free ReLU feed-forward NN with $d_{in}=1$ is employed with weights randomly initialized from symmetric distributions. (The last layer also applies activations.) (\textbf{B}) Diagram indicating safe operating regions for a ReLU NN. The dash lines represent Eq.~\ref{eq:max_layer} based on Theorem~\ref{thm:zero_nobiasinit} while the symbols represent our numerical tests. The maximum number of layers of a neural network can be used at different width to keep the probability of collapse less than 1\% or 10\%. The region below the blue line is the safe region when we design a neural network. As the width increases the theoretical predictions match closer with our numerical simulations.}\label{fig:p_zeroinit_din1}
\end{figure}

\section{Training deep and narrow neural networks}

In this section, we present some training techniques and examine which ones do not suffer from the collapse problem.

\subsection{Asymmetric weight initialization}

Our analysis applies for any symmetric initialization, so it is straightforward to consider asymmetric initializations. The asymmetric initializations proposed in the literature include orthogonal initialization~\citep{saxe2013exact} and layer-sequential unit-variance (LSUV) initialization~\citep{mishkin2015all}. LSUV is the orthogonal initialization combined with rescaling of weights such that the output of each layer has unit variance. Because weight rescaling cannot make the output escape from the negative part of ReLU, it is sufficient to consider the orthogonal initialization. The probability of collapse when using orthogonal initialization is very close to and a little lower than that when using symmetric distributions (Fig.~\ref{fig:orthogonal}). Therefore, orthogonal initialization cannot treat the collapse problem.

\begin{figure}[htbp]
\centering
\includegraphics{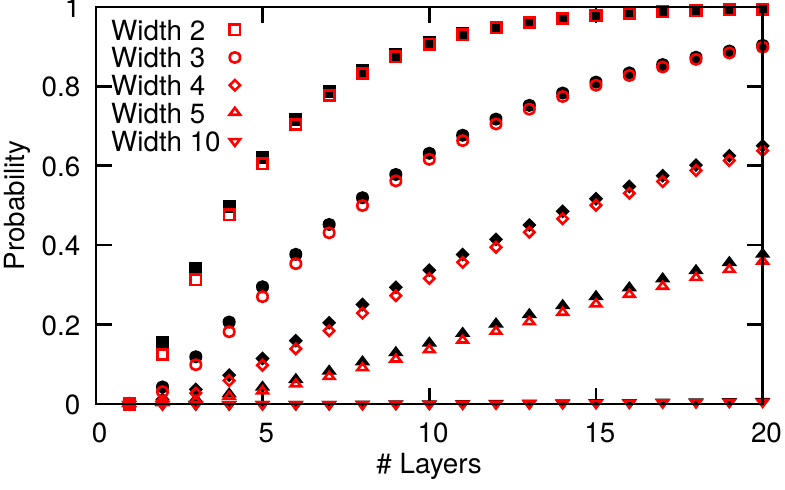}
\caption{Effect of initialization on the collapse of NN. Plotted is the probability of collapse of a bias-free ReLU NN with $d_{in}=1$ with different width and number of layers. The black filled symbols correspond to symmetric initialization while the red open symbols correspond to orthogonal initialization.}\label{fig:orthogonal}
\end{figure}

\subsection{Normalization and dropout}

As we have shown in the previous section, deep and narrow neural networks cannot be trained well directly with gradient-based optimizers. Here, we employ several widely used normalization techniques to train this kind of networks. We do not consider some methods, such as Highway~\citep{srivastava2015training} and ResNet~\citep{he2016deep}, because in these architectures the neural nets are no longer the standard feed-forward neural networks. Current normalization methods mainly include batch normalization (BN)~\citep{ioffe2015batch}, layer normalization (LN)~\citep{ba2016layer}, weight normalization (WN)~\citep{salimans2016weight}, instance normalization (IN)~\citep{ulyanov2016instance}, group normalization (GN)~\citep{wu2018group}, and scaled exponential linear units (SELU)~\citep{klambauer2017self}. BN, LN, IN and GN are similar techniques and follow the same formulation, see~\cite{wu2018group} for the comparison.

Because we focus on the performance of these normalization methods on narrow nets and the width of the neural network must be larger than the dimension of the input to achieve a good approximation, we only test the normalization methods on low dimensional inputs. However, LN, IN and GN perform normalization on each training data individually, and hence they cannot be used in our low-dimensional situations. Hence, we only examine BN, WN and SELU. BN is applied before activations while for SELU LeCun normal initialization is used~\citep{klambauer2017self}. Our simulations show that the neural network can successfully escape from the collapsed areas and approximate the target function with a small error, when BN or SELU are employed. BN changes the weights and biases not only depending on the gradients, and different from ReLU the negative values do not vanish in SELU. However, WN failed because it is only a simple re-parameterization of the weight vectors.

Moreover, our simulations show that the issue of collapse cannot be solved by dropout, which induces sparsity and more zero activations~\citep{srivastava2014dropout}.

\section{Conclusion}

We consider here ReLU neural networks for approximating multi-dimensional functions of different regularity, and in particular we focus on deep and narrow NNs due to their reportedly good approximation properties. However, we found that training such NNs is problematic because they converge to erroneous means or partial means or medians of the target function. We demonstrated this collapse problem numerically using one- and two-dimensional functions with $C^0$, $C^{\infty}$ and $L^2$ regularity. These numerical results are independent of the optimizers we used; the converged state depends on the loss but changing the loss function does not lead to correct answers. In particular, we have observed that the NN with MSE loss converges to the mean or partial mean values while the NN with MAE loss converges to the median values. This collapse phenomenon is induced by the symmetric random initialization, which is popular in practice because it maintains the length of the outputs of each layer as we show theoretically in Section~\ref{ini_Relu}.

We analyze theoretically the collapse phenomenon by first proving that if a NN is a constant function then there must exist a layer with output 0 and the gradients of weights and biases in all the previous layers vanish (Lemma~\ref{lem:const_nn}, Corollary~\ref{nn_const_coro}, and Lemma~\ref{lem:no_gradient}). Subsequently, we prove that if such conditions are met, then the NN will converge to a constant value depending on the loss function (Theorem~\ref{thm:nn2mean}). Furthermore, if the output of NN is equal to the mean value of the target function, the gradients of weights and biases vanish (Corollaries~\ref{coro:nn_stuck_mean} and~\ref{coro:nn_stuck_part_mean}). In Lemma~\ref{lem:zero} and Theorem~\ref{thm:zero_nobiasinit} and Proposition~\ref{prop:zero_biasinit}, we derive estimates of the probability of collapse for general cases, and in Proposition~\ref{prop:nnwidth2}, we derive a more precise estimate for deep NNs with width 2. These theoretical estimates are verified numerically by tests using NNs with different layers and widths. Based on these results, we construct a diagram which can be used as a practical guideline in designing deep and narrow NNs that do not suffer from the collapse phenomenon.

Finally, we examine different methods of preventing deep and narrow NNs from converging to erroneous states. In particular, we find that asymmetric initializations including orthogonal initialization and LSUV cannot be used to avoid this collapse. However, some normalization techniques such as batch normalization and SELU can be used successfully to prevent the collapse of deep and narrow NNs; on the other hand, weight normalization fails. Similarly, we examine the effect of dropout which, however, also fails.

\subsubsection*{Acknowledgments}

This work received support by the DARPA EQUiPS grant N66001-15-2-4055, the NSF grant DMS-1736088, the AFOSR grant FA9550-17-1-0013. The research of the second author was partially supported by the NSF of China 11771083 and the NSF of Fujian 2017J01556, 2016J01013.

\bibliography{ref}
\bibliographystyle{iclr2019_conference}

\appendix

\section{Proof of Lemma~\ref{lem:const_nn}}\label{app:lem:const_nn}
\begin{lemma}\label{lem:randommatrix}
Let $\mathbf{A} \in \mathbb{R}^{n \times m}$ be a random matrix, where $\{\mathbf{A}_{ij}\}_{i \in \{1,2,\dots,n\},j \in \{1,2,\dots,m\}}$ are random variables, and the joint distribution of $(\mathbf{A}_{i1},\mathbf{A}_{i2},\dots, \mathbf{A}_{im})$ is absolutely continuous for $i = 1,2,\dots,n$. If $\mathbf{x} \in \mathbb{R}^m$ is a nonzero column vector, then $\mathbb{P}(\mathbf{A}\mathbf{x} = \mathbf{0}) = 0$.
\end{lemma}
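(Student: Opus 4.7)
The plan is to reduce the full-matrix event $\{\mathbf{A}\mathbf{x}=\mathbf{0}\}$ to a single-row event and invoke the fact that a proper linear subspace of $\mathbb{R}^m$ has $m$-dimensional Lebesgue measure zero, together with absolute continuity of the row distributions. First I would write the event row by row: denoting the $i$-th row of $\mathbf{A}$ by $\mathbf{A}_i \in \mathbb{R}^m$, the event $\{\mathbf{A}\mathbf{x}=\mathbf{0}\}$ is the intersection of the events $\{\mathbf{A}_i \cdot \mathbf{x}=0\}$, $i=1,\dots,n$. Each of these latter events asserts that the random vector $\mathbf{A}_i$ lies in the hyperplane
$$H_{\mathbf{x}} \;:=\; \{\mathbf{v}\in\mathbb{R}^m : \mathbf{v}\cdot \mathbf{x}=0\}.$$

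Next I would verify that $H_{\mathbf{x}}$ is Lebesgue-null in $\mathbb{R}^m$. Since $\mathbf{x}\neq \mathbf{0}$, at least one coordinate, say $\mathbf{x}_{j}$, is nonzero; then $H_{\mathbf{x}}$ is the graph of an affine-linear function of the remaining $m-1$ coordinates, i.e., an $(m-1)$-dimensional affine subspace. A countable-union-of-slabs argument (or Fubini applied to the indicator of $H_{\mathbf{x}}$) shows that its $m$-dimensional Lebesgue measure is zero. By hypothesis the law of $\mathbf{A}_i$ is absolutely continuous with respect to Lebesgue measure, hence assigns probability zero to every Lebesgue-null set. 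Therefore $\mathbb{P}(\mathbf{A}_i \cdot \mathbf{x} = 0) = 0$ for each $i$.

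Finally I would conclude by monotonicity of probability:
$$\mathbb{P}(\mathbf{A}\mathbf{x}=\mathbf{0}) \;=\; \mathbb{P}\!\left(\bigcap_{i=1}^n \{\mathbf{A}_i\cdot\mathbf{x}=0\}\right) \;\leq\; \mathbb{P}(\mathbf{A}_1\cdot\mathbf{x}=0)\;=\;0.$$
No independence between rows is required, only absolute continuity of each row's joint distribution; in particular, the argument goes through even when the entries within a row are correlated, which matches the hypothesis in assumption A2. The main (and really only) conceptual point is the Lebesgue-null fact about hyperplanes; everything else is bookkeeping, so I do not expect a genuine obstacle here.
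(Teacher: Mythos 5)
Your proposal is correct and follows essentially the same route as the paper's proof: both reduce $\{\mathbf{A}\mathbf{x}=\mathbf{0}\}$ to the single-row event $\{\mathbf{A}_{1}\cdot\mathbf{x}=0\}$, observe that this event confines the row to a hyperplane of Lebesgue measure zero in $\mathbb{R}^m$, invoke absolute continuity of the row's joint law, and finish by monotonicity of probability. Your added justification that the hyperplane is Lebesgue-null (via Fubini) is a detail the paper leaves implicit, but the argument is otherwise identical.
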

\begin{proof}
Let us consider the first value of $\mathbf{A}\mathbf{x}$, i.e., $\sum_{j=1}^m \mathbf{A}_{1j}\mathbf{x}_j$. Because $\mathbf{x} \neq \mathbf{0}$, we have $\{\sum_{j=1}^m \mathbf{A}_{1j}\mathbf{x}_j = 0\}$ is a hyperplane in $\mathbb{R}^m$ whose coordinates are $\mathbf{A}_{1j}$, $j=1,2,\dots,m$. Because the joint distribution of $(\mathbf{A}_{11},\mathbf{A}_{12},\dots, \mathbf{A}_{1m})$ is absolutely continuous, $\mathbb{P}(\sum_{j=1}^m \mathbf{A}_{1j}\mathbf{x}_j = 0) = 0$. Hence,
$$0 \leq \mathbb{P}(\mathbf{A}\mathbf{x} = \mathbf{0}) = \mathbb{P}(\sum_{j=1}^m \mathbf{A}_{ij}\mathbf{x}_j = 0 \quad \forall i = 1,2,\dots,n) \leq \mathbb{P}(\sum_{j=1}^m \mathbf{A}_{1j}\mathbf{x}_j = 0) = 0.$$
Therefore, $\mathbb{P}(\mathbf{A}\mathbf{x} = \mathbf{0}) = 0$.
\end{proof}
Now let us go back to the proof of Lemma~\ref{lem:const_nn}.

\begin{proof}
By assumption A2 and Lemma~\ref{lem:randommatrix}, $\mathcal{N}(\mathbf{x}^0) = \mathbf{W}^{L}\mathbf{x}^{L-1}(\mathbf{x}^0)+\mathbf{b}^L$ is a constant function, iff $\mathbf{x}^{L-1}(\mathbf{x}^0)$ is a constant function with respect to $\mathbf{x}^0$. So we can assume that there is ReLU in the last layer, and prove that there exists a layer $l \in \{1, \dots, L\}$, s.t., $\mathbf{h}^l \leq \mathbf{0}$ and $\mathbf{x}^l = \mathbf{0}$ wp1 for every $\mathbf{x}^0 \in \Omega$. We proceed in two steps.

i) For $L=1$, we have $\mathbf{x}^1 = \text{ReLU}(\mathbf{h}) = \text{ReLU}(\mathbf{W}\mathbf{x}^0 + \mathbf{b})$ is a constant. If $\mathbf{h}$ is not always $\leq \mathbf{0}$, then there exists $\tilde{\mathbf{x}}^0 \in \Omega$ and $k$, s.t., $\mathbf{h}_{k}(\tilde{\mathbf{x}}^0) > 0$. Because $\Omega \subset \mathbb{R}^{d_{in}}$ is a connected space with at least two points, then $\Omega$ has no isolated points, which implies $\tilde{\mathbf{x}}^0$ is not an isolated point. Since the neural network is a continuous map, $\Omega^1 = \{\mathbf{x}^1(\mathbf{x}^0) : \mathbf{x}^0 \in \Omega\}$ is connected. So there exists $\hat{\mathbf{x}}^0 \neq \tilde{\mathbf{x}}^0$ in the neighborhood of $\tilde{\mathbf{x}}^0$, s.t., $\mathbf{h}_{k}(\hat{\mathbf{x}}^0) > 0$ and $\mathbf{h}_{k}(\hat{\mathbf{x}}^0) \neq \mathbf{h}_{k}(\tilde{\mathbf{x}}^0)$ wp1, because of $\mathbb{P}(\mathbf{W}(\hat{\mathbf{x}}^0-\tilde{\mathbf{x}}^0)=\mathbf{0})=0$ by Lemma~\ref{lem:randommatrix}. Hence, $\mathbf{x}^1(\tilde{\mathbf{x}}^0) \neq \mathbf{x}^1(\hat{\mathbf{x}}^0)$, which contradicts the fact that $\mathbf{x}^1$ is a constant function. Therefore, $\mathbf{h} \leq \mathbf{0}$ and $\mathbf{x}^1 = \mathbf{0}$.

ii) Assume the theorem is true for $L$. Then for $L+1$, if $\mathbf{x}^1 = 0$, choose $l = 1$ and we are done; otherwise, consider the NN without the first layer with $\mathbf{x}^1 \in \Omega^1$ as the input, denoted $\mathcal{N}_1$. By i, $\Omega^1$ is a connected space with at least two points. Because $\mathcal{N}_1$ is a constant function of $\mathbf{x}^1$ and has $L$ layers, by induction, there exists a layer whose output is zero. Therefore, for the original neural network $\mathcal{N}$, the output of such layer is also zero.

By i and ii, the statement is true for any $L$.
\end{proof}

\section{Proof of Corollary~\ref{nn_const_coro}}\label{app:nn_const_coro}
\begin{proof}
By Lemma~\ref{lem:const_nn}, there exists a layer $l \in \{1, \dots, L-1\}$, s.t. $\mathbf{h}^l \leq \mathbf{0}$ and $\mathbf{x}^l = \mathbf{0}$ wp1. Because $\mathcal{N}$ is bias-free, $\mathbf{h}^{l+1} = \mathbf{W}^{l+1} \mathbf{x}^l = \mathbf{0}$ and $\mathbf{x}^{l+1} = \text{ReLU}(\mathbf{h}^{l+1}) = \mathbf{0}$ wp1. By induction, for any $n \geq l$, $\mathbf{h}^n \leq \mathbf{0}$ and $\mathbf{x}^n = \mathbf{0}$ wp1.
\end{proof}

\section{Proof of Lemma~\ref{lem:no_gradient}}\label{app:lem:no_gradient}
\begin{proof}
Because $\mathbf{x}^l \equiv \mathbf{0}$, it is then obvious by backpropagation.
\end{proof}

\section{Proof of Theorem~\ref{thm:nn2mean}}\label{app:thm:nn2mean}
\begin{proof}
Because $\mathbf{x}^l(\mathbf{x}^0) \equiv \mathbf{0}$, $\mathcal{N}(\mathbf{x}^0)$ is a constant function, and then by Lemma~\ref{lem:no_gradient}, gradients of the loss function w.r.t. the weights and biases in layers $1, \dots, l$ vanish. Hence, the weights and biases in layers $1, \dots, l$ will not change when using a gradient based optimizer, which implies $\mathcal{N}(\mathbf{x}^0)$ is always a constant function depending on the weights and biases in layers $l+1, \dots, L$. Therefore, $\mathcal{N}$ will be optimized to a constant function, which has the smallest loss. For $L^2$ loss, this constant with the smallest loss is $\mathbb{E}[\mathbf{y}]$. For $L^1$ loss, this constant with the smallest loss is its median.
\end{proof}

\section{Proof of Corollary~\ref{coro:nn_stuck_mean}}\label{app:coro:nn_stuck_mean}
\begin{proof}
Because $\mathcal{N}(\mathbf{x}^0)$ is a constant function, by Lemma~\ref{lem:const_nn} and Theorem~\ref{thm:nn2mean}, $\mathcal{N}$ is optimized to $\mathbb{E}[\mathbf{y}]$. Also, since $\mathcal{N}$ is equal to $\mathbb{E}[\mathbf{y}]$, gradients vanish.
\end{proof}

\section{Proof of Corollary~\ref{coro:nn_stuck_part_mean}}\label{app:coro:nn_stuck_part_mean}

\begin{proof}
It suffices to show that gradients vanish for $\mathbf{x}^0 \in K_i$, $i=1,\dots, n$ and $\mathbf{x}^0 \in \Omega \setminus \cup_{i=1}^n K_i$.

i) When $\mathbf{x}^0$ is restricted on $K_i$, $\mathcal{N}(\mathbf{x}^0)$ is a constant function with value $\mathbb{E}_{\mathbf{x}^0_{K_i}}[\mathbf{y}(\mathbf{x}^0_{K_i})]$. Similar to Corollary~\ref{coro:nn_stuck_mean}, gradients vanish when using the $L^2$ loss.

ii) For $\mathbf{x}^0 \in \Omega \setminus \cup_{i=1}^n K_i$, the loss at $\mathbf{x}^0$ is 0, so gradients vanish.

By i and ii, gradients vanish when using the $L^2$ (MSE) loss.
\end{proof}

\section{Proof of Lemma~\ref{lem:zero}}\label{app:lem:zero}

\begin{proof}
Let $\mathbf{x} = (x_1, x_2, \dots, x_{d_{in}})$ be any input, and $\mathbf{y} = (y_1, y_2, \dots, y_{d_{out}})$ be the corresponding output. For $i = 1, \dots, d_{out}$,
$$y_i = \text{ReLU}(\mathbf{w}_{i} \cdot \mathbf{x} + b_i) = \text{ReLU}((w_{i1}, \dots, w_{id_{in}}, b_i) \cdot (x_1, x_2, \dots, x_{d_{in}}, 1)).$$
Because $(w_{i1}, \dots, w_{id_{in}}, b_i)$ is a $(d_{in} + 1)$-dim vector initialized by a symmetric distribution, then
$$\mathbb{P}((w_{i1}, \dots, w_{id_{in}}, b_i) \cdot (x_1, x_2, \dots, x_{d_{in}}, 1) > 0) = \frac{1}{2}.$$
So $\mathbb{P}(y_i = 0) = \frac{1}{2}$, and then $\mathbb{P}(\mathbf{y} = \mathbf{0}) = \Pi_{i=1}^{d_{out}}\mathbb{P}(y_i = 0) = (\frac{1}{2})^{d_{out}}$. Here $\mathbb{P}$ denotes the probability.
\end{proof}

\section{Proof of Theorem~\ref{thm:zero_nobiasinit}}\label{app:thm:zero_nobiasinit}

\begin{proof}
If the last layer also employs ReLU activation, by Lemma~\ref{lem:zero}, $\mathbb{P}(\mathbf{x}^l = \mathbf{0} | \mathbf{x}^{l-1} \neq \mathbf{0}) = (1/2)^{N^l}$ for $l=1, \dots, L$. Then, for any fixed input $\mathbf{x}^0 \neq \mathbf{0}$,
\begin{multline*}
\mathbb{P}(\mathbf{x}^L \neq \mathbf{0}) = \mathbb{P}(\mathbf{x}^{L-1} \neq \mathbf{0})\mathbb{P}(\mathbf{x}^L \neq \mathbf{0} | \mathbf{x}^{L-1} \neq \mathbf{0}) + \mathbb{P}(\mathbf{x}^{L-1} = \mathbf{0})\mathbb{P}(\mathbf{x}^L \neq \mathbf{0} | \mathbf{x}^{L-1} = \mathbf{0}) \\
= \mathbb{P}(\mathbf{x}^{L-1} \neq \mathbf{0})\mathbb{P}(\mathbf{x}^L \neq \mathbf{0} | \mathbf{x}^{L-1} \neq \mathbf{0})
= \mathbb{P}(\mathbf{x}^{L-1} \neq \mathbf{0}) (1-(1/2)^{N^l}) = \dots = \Pi_{l=1}^L(1-(1/2)^{N^l}).
\end{multline*}
The last equality holds because $\mathbb{P}(\mathbf{x}^0 \neq 0) = 1$.

If in the last layer we do not apply ReLU activation, then $\mathbb{P}(\mathbf{x}^L \neq \mathbf{0}) = \mathbb{P}(\mathbf{x}^{L-1} \neq \mathbf{0}) = \Pi_{l=1}^{L-1}(1-(1/2)^{N^l})$.
\end{proof}

\section{Proof of Proposition~\ref{prop:zero_biasinit}}\label{app:prop:zero_biasinit}

\begin{proof}
If the last layer also has ReLU activation, by Lemma~\ref{lem:zero},
\begin{multline*}
\mathbb{P}(\mathbf{x}^L = \mathbf{0}) = \mathbb{P}(\mathbf{x}^{L-1} \neq \mathbf{0})\mathbb{P}(\mathbf{x}^L = \mathbf{0} | \mathbf{x}^{L-1} \neq \mathbf{0}) + \mathbb{P}(\mathbf{x}^{L-1} = \mathbf{0})\mathbb{P}(\mathbf{x}^L = \mathbf{0} | \mathbf{x}^{L-1} = \mathbf{0}) \\
= \mathbb{P}(\mathbf{x}^{L-1} \neq \mathbf{0}) (1/2)^{N^L} + \mathbb{P}(\mathbf{x}^{L-1} = \mathbf{0})(1/2)^{N^L} = (1/2)^{N^L}.
\end{multline*}
If the last layer does not have ReLU activation, and $L \geq 2$, then
$$\mathbb{P}(\mathbf{x}^L = \mathbf{b}_n) = \mathbb{P}(\mathbf{x}^{L-1} = \mathbf{0}) = (1/2)^{N^{L-1}}.$$
For $L=1$, $\mathcal{N}$ is a single layer perceptron, which is a trivial case.
\end{proof}

\section{Proof of Proposition~\ref{prop:nnwidth2}}\label{app:prop:nnwidth2}

\begin{proof}
We consider a ReLU neural network with $d_{in}=1$ and each hidden layer with width 2. Because all biases are zero, then it is easy to see the following fact: when the input is 0, the output of any neuron in any layer is 0; when the input is negative, the output of any neuron in any layer is a linear function with respect to the input; when the input is positive, the output of any neuron in any layer is also a linear function with respect to the input. Because the origin is an interior point of $\Omega$, then it suffices to consider a subset $[-a, a] \subset \Omega$ with $a \in \mathbb{R^+}$. The output of each hidden layer has 16 possible cases:
\begin{eqnarray*}
\begin{array}{cc}
\text{case (1): }\left\{\begin{array}{cc}
\left(\begin{array}{c}
\omega_1x\\\omega_2x
\end{array}\right),&x\in[0,a]\\
\left(\begin{array}{c}
\omega^*_1x\\\omega^*_2x
\end{array}\right),&x\in[-a,0]
\end{array}\right.,
&
\text{case (2): }\left\{\begin{array}{cc}
\left(\begin{array}{c}
\omega_1x\\\omega_2x
\end{array}\right),&x\in[0,a]\\
\left(\begin{array}{c}
\omega^*_1x\\0
\end{array}\right),&x\in[-a,0]
\end{array}\right.,
\end{array}
\end{eqnarray*}

\begin{eqnarray*}
\begin{array}{cc}
\text{case (3): }\left\{\begin{array}{cc}
\left(\begin{array}{c}
\omega_1x\\\omega_2x
\end{array}\right),&x\in[0,a]\\
\left(\begin{array}{c}
0\\\omega^*_2x
\end{array}\right),&x\in[-a,0]
\end{array}\right.,
&
\text{case (4): }\left\{\begin{array}{cc}
\left(\begin{array}{c}
\omega_1x\\\omega_2x
\end{array}\right),&x\in[0,a]\\
\left(\begin{array}{c}
0\\0
\end{array}\right),&x\in[-a,0]
\end{array}\right.,
\end{array}
\end{eqnarray*}

\begin{eqnarray*}
\begin{array}{cc}
\text{case (5): }\left\{\begin{array}{cc}
\left(\begin{array}{c}
\omega_1x\\0
\end{array}\right),&x\in[0,a]\\
\left(\begin{array}{c}
\omega^*_1x\\\omega^*_2x
\end{array}\right),&x\in[-a,0]
\end{array}\right.,
&
\text{case (6): }\left\{\begin{array}{cc}
\left(\begin{array}{c}
\omega_1x\\0
\end{array}\right),&x\in[0,a]\\
\left(\begin{array}{c}
\omega^*_1x\\0
\end{array}\right),&x\in[-a,0]
\end{array}\right.,
\end{array}
\end{eqnarray*}

\begin{eqnarray*}
\begin{array}{cc}
\text{case (7): }\left\{\begin{array}{cc}
\left(\begin{array}{c}
\omega_1x\\0
\end{array}\right),&x\in[0,a]\\
\left(\begin{array}{c}
0\\\omega^*_2x
\end{array}\right),&x\in[-a,0]
\end{array}\right.,
&
\text{case (8): }\left\{\begin{array}{cc}
\left(\begin{array}{c}
\omega_1x\\0
\end{array}\right),&x\in[0,a]\\
\left(\begin{array}{c}
0\\0
\end{array}\right),&x\in[-a,0]
\end{array}\right.,
\end{array}
\end{eqnarray*}

\begin{eqnarray*}
\begin{array}{cc}
\text{case (9): }\left\{\begin{array}{cc}
\left(\begin{array}{c}
0\\\omega_2x
\end{array}\right),&x\in[0,a]\\
\left(\begin{array}{c}
\omega^*_1x\\\omega^*_2x
\end{array}\right),&x\in[-a,0]
\end{array}\right.,
&
\text{case (10): }\left\{\begin{array}{cc}
\left(\begin{array}{c}
0\\\omega_2x
\end{array}\right),&x\in[0,a]\\
\left(\begin{array}{c}
\omega^*_1x\\0
\end{array}\right),&x\in[-a,0]
\end{array}\right.,
\end{array}
\end{eqnarray*}

\begin{eqnarray*}
\begin{array}{cc}
\text{case (11): }\left\{\begin{array}{cc}
\left(\begin{array}{c}
0\\\omega_2x
\end{array}\right),&x\in[0,a]\\
\left(\begin{array}{c}
0\\\omega^*_2x
\end{array}\right),&x\in[-a,0]
\end{array}\right.,
&
\text{case (12): }\left\{\begin{array}{cc}
\left(\begin{array}{c}
0\\\omega_2x
\end{array}\right),&x\in[0,a]\\
\left(\begin{array}{c}
0\\0
\end{array}\right),&x\in[-a,0]
\end{array}\right.,
\end{array}
\end{eqnarray*}

\begin{eqnarray*}
\begin{array}{cc}
\text{case (13): }\left\{\begin{array}{cc}
\left(\begin{array}{c}
0\\0
\end{array}\right),&x\in[0,a]\\
\left(\begin{array}{c}
\omega^*_1x\\\omega^*_2x
\end{array}\right),&x\in[-a,0]
\end{array}\right.,
&
\text{case (14): }\left\{\begin{array}{cc}
\left\{\begin{array}{c}
0\\0
\end{array}\right),&x\in[0,a]\\
\left(\begin{array}{c}
\omega^*_1x\\0
\end{array}\right),&x\in[-a,0]
\end{array}\right.,
\end{array}
\end{eqnarray*}

\begin{eqnarray*}
\begin{array}{cc}
\text{case (15): }\left\{\begin{array}{cc}
\left(\begin{array}{c}
0\\0
\end{array}\right),&x\in[0,a]\\
\left(\begin{array}{c}
0\\\omega^*_2x
\end{array}\right),&x\in[-a,0]
\end{array}\right.,
&
\text{case (16): }\left\{\begin{array}{cc}
\left(\begin{array}{c}
0\\0
\end{array}\right),&x\in[0,a]\\
\left(\begin{array}{c}
0\\0
\end{array}\right),&x\in[-a,0]
\end{array}\right.,
\end{array}
\end{eqnarray*}
where $w_1$, $w_2$, $w_1^*$, $w_2^*$ are some coefficients.

Each case in the $l$th hidden layer may also induce all 16 cases in the $(l+1)$th layer. For any given case in the $l$th hidden layer, we will compute the probabilities of these 16 cases for the $(l+1)$th layer as follows. 

i) Case (1)

Note that $\boldsymbol{\omega}=(\omega_1,\omega_2)$ lies in the first 
quadrant, and $\boldsymbol{\omega}^*=(\omega_1^*,\omega_2^*)$ lies in the third quadrant. Then the output of the next layer is
$$\left\{\begin{array}{cc}
\left(\begin{array}{c}
\mathrm{ReLU}((A_{11}\omega_1+A_{12}\omega_2)x)\\
\mathrm{ReLU}((A_{21}\omega_1+A_{22}\omega_2)x)
\end{array}\right),&x\in[0,a]\\
\left(\begin{array}{c}
\mathrm{ReLU}((A_{11}\omega_1^*+A_{12}\omega_2^*)x)\\
\mathrm{ReLU}((A_{21}\omega_1^*+A_{22}\omega_2^*)x)
\end{array}\right),&x\in[-a,0]
\end{array}\right..$$
Since the matrix $\left(\begin{array}{cc}A_{11}&A_{12}\\A_{21}&A_{22}\end{array}\right)$ is random,
for fixed $\boldsymbol{\omega}$ and $\boldsymbol{\omega}^*$, the probability of case (1) is 
$\left(\frac{\angle(\boldsymbol{\omega},\boldsymbol{\omega}^*)}{2\pi}\right)^2$. Without loss of generality, we can assume that $\|\boldsymbol{\omega}\|=\|\boldsymbol{\omega}^*\|=1$, and hence we can assume that 
$\boldsymbol{\omega}=(\cos\theta,\sin\theta)$, $\theta\in(0,\frac{\pi}{2})$ and $\boldsymbol{\omega}^*=(\cos\psi,\sin\psi)$, $\psi\in(\pi,\frac{3\pi}{2})$. It is easy to see that
$$\angle(\boldsymbol{\omega},\boldsymbol{\omega}^*)=\left\{\begin{array}{cc}
\psi-\theta,&\psi\leq\theta+\pi\\2\pi+\theta-\psi,&\psi>\theta+\pi
\end{array}\right..$$
Since $\boldsymbol{\omega},\boldsymbol{\omega}^*$ are random, the probability of case (1) is
$$\frac{2^2}{\pi^2}\int_0^{\frac{\pi}{2}}d\theta\int_\pi^{\frac{3}{2}\pi}\left(\frac{\angle(\boldsymbol{\omega},\boldsymbol{\omega}^*)}{2\pi}\right)^2d\psi=\frac{17}{96}.$$
Similarly, the probability of cases (6), (11) and (16) in the $(l+1)$th layer are also $\frac{17}{96}$. 
For cases (2), (3), (5), (8), (9), (12), (14) and (15), the probability is
$$\frac{2^2}{\pi^2}\int_0^{\frac{\pi}{2}}d\theta\int_\pi^{\frac{3}{2}\pi}\frac{\angle(\boldsymbol{\omega},\boldsymbol{\omega}^*)}{2\pi}\cdot\frac{2\pi-\angle(\boldsymbol{\omega},\boldsymbol{\omega}^*)}{2\pi}d\psi=\frac{1}{32}.$$
For cases (4), (7), (10) and (13), the probability is
$$\frac{2^2}{\pi^2}\int_0^{\frac{\pi}{2}}d\theta\int_\pi^{\frac{3}{2}\pi}\left(\frac{2\pi-\angle(\boldsymbol{\omega},\boldsymbol{\omega}^*)}{2\pi}\right)^2d\psi=\frac{1}{96}.$$

ii) Case (2) (the same method can be applied for cases (3), (5) and (9))

Note that in this case we can assume that $\boldsymbol{\omega}=(\cos\theta,\sin\theta)$, $\theta\in(0,\frac{\pi}{2})$ and $\boldsymbol{\omega}^*=(-1,0)$ is a constant vector.  It is easy to see that $\angle(\boldsymbol{\omega},\boldsymbol{\omega}^*)=\pi-\theta$, and hence the probabilities of cases (1), (6), (11) and (16) are 
$$\frac{2}{\pi}\int_0^{\frac{\pi}{2}}\left(\frac{\angle(\boldsymbol{\omega},\boldsymbol{\omega}^*)}{2\pi}\right)^2d\theta=\frac{7}{48}.$$
Similarly, the probabilities of cases (2), (3), (5), (8), (9), (12), (14) and (15) are
$$\frac{2}{\pi}\int_0^{\frac{\pi}{2}}\frac{\angle(\boldsymbol{\omega},\boldsymbol{\omega}^*)}{2\pi}\cdot\frac{2\pi-\angle(\boldsymbol{\omega},\boldsymbol{\omega}^*)}{2\pi}d\theta=\frac{1}{24},$$
and the probabilities of cases (4), (7), (10) and (13) are
$$\frac{2}{\pi}\int_0^{\frac{\pi}{2}}\left(\frac{2\pi-\angle(\boldsymbol{\omega},\boldsymbol{\omega}^*)}{2\pi}\right)^2d\theta=\frac{1}{48}.$$

iii) Case (4) (the same method can be applied for cases (8) and (12))

The output of the next layer is
$$\left\{\begin{array}{cc}
\left(\begin{array}{c}
\mathrm{ReLU}((A_{11}\omega_1+A_{12}\omega_2)x)\\
\mathrm{ReLU}((A_{21}\omega_1+A_{22}\omega_2)x)
\end{array}\right),&x\in[0,a]\\
\left(\begin{array}{c}
0\\
0
\end{array}\right),&x\in[-a,0]
\end{array}\right..$$
It is easy to see that the probabilities of cases (4), (8), (12) and (16) are $\frac{1}{4}$, and the probabilities of all other cases are 0.  

iv) Case (6) (the same method can be applied for case (11))

The output of the next layer is
$$\left\{\begin{array}{cc}
\left(\begin{array}{c}
\mathrm{ReLU}(A_{11}\omega_1x)\\
\mathrm{ReLU}(A_{21}\omega_1x)
\end{array}\right),&x\in[0,a]\\
\left(\begin{array}{c}
\mathrm{ReLU}(A_{11}\omega^*_1x)\\
\mathrm{ReLU}(A_{21}\omega^*_1x)
\end{array}\right),&x\in[-a,0]
\end{array}\right..$$
Note that in this case, $\omega_1>0$ and $\omega_1^*<0$, and thus it is not hard to see that the probabilities of cases (1), (6), (11) and (16) are $\frac{1}{4}$, and the probabilities of all the other cases are 0.  

v) Case (7) (the same method can be applied for case (10))

The output of the next layer is
$$\left\{\begin{array}{cc}
\left(\begin{array}{c}
\mathrm{ReLU}(A_{11}\omega_1x)\\
\mathrm{ReLU}(A_{21}\omega_1x)
\end{array}\right),&x\in[0,a]\\
\left(\begin{array}{c}
\mathrm{ReLU}(A_{12}\omega^*_2x)\\
\mathrm{ReLU}(A_{22}\omega^*_2x)
\end{array}\right),&x\in[-a,0]
\end{array}\right..$$
Therefore, the probabilities of all the 16 cases are $\frac{1}{16}$.

vi) Case (13) (the same method can be applied for cases (14) and (15))

Similar to the argument of the case $(4)$, it is easily to see that the probabilities for cases (13), (14), (15) and (16) are $\frac{1}{4}$, and the probabilities for all other cases are $0$.  

vii) Case (16)

The output of the next layer is the case (16) with probability $1$.

By i, ii, iii, iv, v, vi and vii, we can get the probability transition matrix
\begin{equation*}
P = \begin{bmatrix}
\frac{17}{96} & \frac{7}{48} & \frac{7}{48} & 0 & \frac{7}{48} & \frac{1}{4} & \frac{1}{16} & 0 & \frac{7}{48} & \frac{1}{16} & \frac{1}{4} & 0 & 0 & 0 & 0 & 0 \\
\frac{1}{32} & \frac{1}{24} & \frac{1}{24} & 0 & \frac{1}{24} & 0 & \frac{1}{16} & 0 & \frac{1}{24} & \frac{1}{16} & 0 & 0 & 0 & 0 & 0 & 0 \\
\frac{1}{32} & \frac{1}{24} & \frac{1}{24} & 0 & \frac{1}{24} & 0 & \frac{1}{16} & 0 & \frac{1}{24} & \frac{1}{16} & 0 & 0 & 0 & 0 & 0 & 0 \\
\frac{1}{96} & \frac{1}{48} & \frac{1}{48} & \frac{1}{4} & \frac{1}{48} & 0 & \frac{1}{16} & \frac{1}{4} & \frac{1}{48} & \frac{1}{16} & 0 & \frac{1}{4} & 0 & 0 & 0 & 0 \\
\frac{1}{32} & \frac{1}{24} & \frac{1}{24} & 0 & \frac{1}{24} & 0 & \frac{1}{16} & 0 & \frac{1}{24} & \frac{1}{16} & 0 & 0 & 0 & 0 & 0 & 0 \\
\frac{17}{96} & \frac{7}{48} & \frac{7}{48} & 0 & \frac{7}{48} & \frac{1}{4} & \frac{1}{16} & 0 & \frac{7}{48} & \frac{1}{16} & \frac{1}{4} & 0 & 0 & 0 & 0 & 0 \\
\frac{1}{96} & \frac{1}{48} & \frac{1}{48} & 0 & \frac{1}{48} & 0 & \frac{1}{16} & 0 & \frac{1}{48} & \frac{1}{16} & 0 & 0 & 0 & 0 & 0 & 0 \\
\frac{1}{32} & \frac{1}{24} & \frac{1}{24} & \frac{1}{4} & \frac{1}{24} & 0 & \frac{1}{16} & \frac{1}{4} & \frac{1}{24} & \frac{1}{16} & 0 & \frac{1}{4} & 0 & 0 & 0 & 0 \\
\frac{1}{32} & \frac{1}{24} & \frac{1}{24} & 0 & \frac{1}{24} & 0 & \frac{1}{16} & 0 & \frac{1}{24} & \frac{1}{16} & 0 & 0 & 0 & 0 & 0 & 0 \\
\frac{1}{96} & \frac{1}{48} & \frac{1}{48} & 0 & \frac{1}{48} & 0 & \frac{1}{16} & 0 & \frac{1}{48} & \frac{1}{16} & 0 & 0 & 0 & 0 & 0 & 0 \\
\frac{17}{96} & \frac{7}{48} & \frac{7}{48} & 0 & \frac{7}{48} & \frac{1}{4} & \frac{1}{16} & 0 & \frac{7}{48} & \frac{1}{16} & \frac{1}{4} & 0 & 0 & 0 & 0 & 0 \\
\frac{1}{32} & \frac{1}{24} & \frac{1}{24} & \frac{1}{4} & \frac{1}{24} & 0 & \frac{1}{16} & \frac{1}{4} & \frac{1}{24} & \frac{1}{16} & 0 & \frac{1}{4} & 0 & 0 & 0 & 0 \\
\frac{1}{96} & \frac{1}{48} & \frac{1}{48} & 0 & \frac{1}{48} & 0 & \frac{1}{16} & 0 & \frac{1}{48} & \frac{1}{16} & 0 & 0 & \frac{1}{4} & \frac{1}{4} & \frac{1}{4} & 0 \\
\frac{1}{32} & \frac{1}{24} & \frac{1}{24} & 0 & \frac{1}{24} & 0 & \frac{1}{16} & 0 & \frac{1}{24} & \frac{1}{16} & 0 & 0 & \frac{1}{4} & \frac{1}{4} & \frac{1}{4} & 0 \\
\frac{1}{32} & \frac{1}{24} & \frac{1}{24} & 0 & \frac{1}{24} & 0 & \frac{1}{16} & 0 & \frac{1}{24} & \frac{1}{16} & 0 & 0 & \frac{1}{4} & \frac{1}{4} & \frac{1}{4} & 0 \\
\frac{17}{96} & \frac{7}{48} & \frac{7}{48} & \frac{1}{4} & \frac{7}{48} & \frac{1}{4} & \frac{1}{16} & \frac{1}{4} & \frac{7}{48} & \frac{1}{16} & \frac{1}{4} & \frac{1}{4} & \frac{1}{4} & \frac{1}{4} & \frac{1}{4} & 1
\end{bmatrix},
\end{equation*}
where $P_{ji}$ is the probability of that the $(l+1)$th layer is case $j$ when the $i$th layer is case $i$.

Furthermore, direct computations show that the probability distribution vector of the first hidden layer $\pi^1$ is
$$\left(0,0,0,\frac{1}{4},0,0,\frac{1}{4},0,0,\frac{1}{4},0,0,\frac{1}{4},0,0,0\right)^T.$$
Therefore, the probability distribution of the $l$th hidden layer is 
$$\pi^l=P^{l-1}\pi^1.$$
\end{proof}

\end{document}